\newtheorem{definition}{Definition}
\newtheorem{assumption}{Assumption}
\newtheorem{theorem}{Theorem}
\newtheorem{lemma}{Lemma}
\newtheorem{corollary}{Corollary}
\newtheorem{remark}{Remark}
 \newcommand{\ignore}[1]{}
\DeclarePairedDelimiter\floor{\lfloor}{\rfloor}
\DeclareMathOperator*{\argmax}{arg\,max}
\newcommand\numberthis{\addtocounter{equation}{1}\tag{\theequation}}
\DeclarePairedDelimiterX\Basics[1](){ #1}
\newcounter{const-no}
\def\EE{{\mathbb{E}}}\def\PP{{\mathbb{P}}}
\def\cF{{\mathcal{F}}}
\DeclareMathOperator*{\argmin}{\arg\!\min}
\newcommand{\setword}[2]{%
	\phantomsection
	#1\def\@currentlabel{\unexpanded{#1}}\label{#2}%
}
\title{Contextual Bandits with Stochastic Experts}
\author[1]{Rajat Sen}
\author[2]{Karthikeyan Shanmugam}
\author[1]{Nihal Sharma}
\author[1]{Sanjay Shakkottai}
\affil[1]{The University of Texas at Austin}
\affil[2]{IBM Research, Thomas J. Watson Center}
\begin{document}
	\maketitle

%

%

%
%
%

\begin{abstract}
  We consider the problem of contextual bandits with stochastic experts, which is a variation of the traditional stochastic contextual bandit with experts problem. In our problem setting, we assume access to a class of {\it stochastic experts}, where each expert is a conditional distribution over the arms given a context. We propose upper-confidence bound (UCB) algorithms for this problem, which employ two different importance sampling based estimators for the mean reward for each expert. Both these estimators leverage \textit{information leakage} among the experts, thus using samples collected under all the experts to estimate the mean reward of any given expert. This leads to \textit{instance dependent} regret bounds of $\mathcal{O}\left(\lambda(\pmb{\mu})\mathcal{M}\log T/\Delta \right)$, where $\lambda(\pmb{\mu})$ is a term that depends on the mean rewards of the experts, $\Delta$ is the smallest gap between the mean reward of the optimal expert and the rest, and $\mathcal{M}$ quantifies the information leakage among the experts. We show that under some assumptions $\lambda(\pmb{\mu})$ is typically $\mathcal{O}(\log N)$. We implement our algorithm with stochastic experts generated from cost-sensitive classification oracles and show superior empirical performance on real-world datasets, when compared to other state of the art contextual bandit algorithms.  
\end{abstract}

\section{Introduction}
\label{sec:intro}
Modern machine learning applications like recommendation engines~\cite{li2011scene,bouneffouf2012contextual,li2010contextual}, computational advertising~\cite{tang2013automatic,bottou2013counterfactual}, A/B testing in medicine~\cite{tekin2015discover,tekin2016adaptive} are inherently online. In these settings the task is to take sequential decisions that are not only profitable but also enable the system to learn better in future. For instance in a computational advertising system, the task is to sequentially place advertisements on users' webpages with the dual objective of learning the preferences of the users and increasing the click-through rate on the fly. A key attribute of these systems is the well-known \textit{exploration} (searching the space of possible decisions for better learning) and \textit{exploitation} (taking decisions that are more profitable) trade-off.\footnote{This paper is a revised version of \cite{pmlr-v84-sen18a}, where some of the concentration bounds in the Appendix had flaws. We have  updated the proofs, the corresponding constants in the Algorithms and the bounds in the Appendix. As a result, the  multiplicative constants in the regret analysis have been changed  and the simulations have been revised.}
A principled method to capture this trade-off is the study of multi-armed bandit problems~\cite{bubeck2012regret}.

$K$-armed stochastic bandit problems have been studied for several decades.
These are formulated as a sequential process, where at each time step any one of the $K$-arms can be selected. Upon selection of the $k$-th arm, the arm returns a stochastic reward with an expected reward of $\mu_k$.
%
Starting from the work of  \cite{lai1985asymptotically}, a major focus has been on \textit{regret}, which is the difference in the total reward that is accumulated from the \textit{genie} optimal policy (one that always selects the arm with the maximum expected reward) from that of the chosen online policy.
The current state-of-art algorithms achieve a regret
of $O((K/\Delta) \log T )$
\cite{bubeck2012regret,auer2010ucb,agrawal2012analysis,auer2002using},
which is order-wise optimal~\cite{lai1985asymptotically}. Here, $\Delta$ corresponds to the gap in expected reward between the best arm and the next best one.


Additional side information can be incorporated in this setting through the framework of contextual bandits. In the stochastic setting, it is assumed that at each time-step nature draws $(x,r_1,...,r_K)$ from a fixed but unknown distribution. Here, $x \in \mathcal{X}$ represents the context vector, while $r_1,...,r_K$ are the rewards of the $K$-arms~\cite{langford2008epoch}. The context $x$ is revealed to the policy-designer, after which she decides to choose an arm $a \in \{1,2,...,K \}$. Then, the reward $r_a$ is revealed to the policy-designer. In the computational advertising example, the context can be thought of as the browsing history, age, gender etc. of an user arriving in the system, while $r_1,...,r_K$ are generated according to the probability of the user clicking on each of the $K$ advertisements. The task here is to learn a \textit{good} mapping from the space of contexts $\mathcal{X}$ to the space of arms $[K] = \{1,2,...,K\}$ such that when the decisions are taken according to that mapping, the mean reward observed is high.

A popular model in the stochastic contextual bandits literature is the \textit{experts} setting~\cite{agarwal2014taming,dudik2011efficient,langford2008epoch}. The task is to compete against the best \textit{expert} in a class of experts $\Pi = \{\pi_1,...,\pi_N\}$, where each expert $\pi \in \Pi$ is a function mapping $\mathcal{X} \rightarrow [K]$. The mean reward of an expert $\pi$ is defined as $ \EE \left[ r_{\pi(X)} \right]$, where $X$ is the random variable denoting the context and the expectation is taken over the unknown distribution over $(x,r_1,...,r_K)$. The best expert is naturally defined as the expert with the highest mean reward. The expected difference in rewards of a genie policy that always chooses the best expert and the online algorithm employed by the policy-designer is defined as the regret. This problem has been well-studied in the literature, where a popular approach is to reduce the contextual bandit problem to supervised learning techniques through $\argmin$-oracles~\cite{beygelzimer2009offset}. This leads to powerful algorithms with instance-independent regret bounds of $\mathcal{O} \left(\sqrt{KT\mathrm{polylog}(N)} \right)$ at time $T$~\cite{agarwal2014taming,dudik2011efficient}.  

In practice the class of experts are generated online by training cost-sensitive classification oracles~\cite{agarwal2014taming,dudik2011efficient}. Once trained, the resulting classifiers/oracles can provide reliable confidence scores given a new context, especially if they are well-calibrated~\cite{cohen2004properties}. These confidence scores effectively are a $K$-dimensional probability vector, where the $k^{th}$ entry is the probability of the classifier/oracle choosing the $k^{th}$ arm as the best, given a context. Motivated by this observation, we propose a variation of the traditional experts setting, which we term contextual bandits with \textit{stochastic experts.} We assume access to a class of \textit{stochastic experts} $\Pi = \{\pi_1,...,\pi_N\}$, which are \textit{not deterministic}. Instead, each expert $\pi \in \Pi$, is a conditional probability distribution over the arms given a context. For an expert $\pi \in \Pi$ the conditional distribution is denoted by $\pi_{V \vert X}(v \vert x)$ where $V \in [K]$ is the random variable denoting the arm chosen and $X$ is the context.  An additional benefit is that this setting allows us to derive regret bounds in terms of \textit{closeness} of these soft experts quantified by divergence measures, rather than in terms of the total number of arms $K$.

As before, the task is to compete against the expert in the class with the highest mean reward. The expected reward of a stochastic expert $\pi$ is defined as $\EE_{X,V \sim \pi(V \vert X)} \left[ r_{V}\right]$, i.e the mean reward observed when the arm is drawn from the conditional distribution $\pi(V \vert X)$. We propose upper-confidence (UCB) style algorithms for the contextual bandits with stochastic experts problem, that employ two importance sampling based estimators for the mean rewards under various experts. We prove \textit{instance-dependent} regret guarantees for our algorithms. The main contributions of this paper are listed in the next section.

\subsection{Main Contributions}
The contributions of this paper are three-fold:

{\bf $(i)$ (Importance Sampling based Estimators):}
The key components in our approach are two importance sampling based estimators for the mean rewards under all the experts. Both these estimators are based on the observation that samples collected under one expert can be reweighted by likelihood/importance ratios and averaged to provide an estimate for the mean reward under another expert. This sharing of information is termed as \textit{information leakage} and has been utilized before under various settings~\cite{lattimore2016causal,pmlr-v70-sen17a,bottou2013counterfactual}. The first estimator that we use is an adaptive variant of the well-known clipping technique, which was proposed in~\cite{pmlr-v70-sen17a}. The estimator is presented in Eq.~\eqref{eq:est1}. However, we carefully adapt the clipping threshold in an online manner, in order to achieve regret guarantees. 

We also propose an importance sampling variant of the classical median of means estimator (see ~\cite{lugosi2017sub,bubeck2013bandits}). This estimator is also designed to utilize the samples collected under all experts together to estimate the mean reward under any given expert. We define the estimator in Eq.~\eqref{eq:est2}. To the best of our knowledge, importance sampling has not been used in conjunction with the median of means technique in the literature before. We provide novel confidence guarantees for this estimator which depends on chi-square divergences between the conditional distributions under the various experts. This may be of independent interest. 

{\bf $(ii)$ (Instance Dependent Regret Bounds):}  We propose the contextual bandits with stochastic experts problem. We design two UCB based algorithms for this problem, based on the two importance sampling based estimators mentioned above. We show that utilizing the \textit{information leakage} between the experts leads to regret guarantees that scale sub-linearly in $N$, the number of experts. The information leakage between any two experts in the first estimator is governed by a pairwise log-divergence measure (Def.~\ref{def:mij}). For the second estimator, chi-square divergences (Def.~\ref{def:sij}) characterize the leakage.

We show that the regret of our UCB algorithm based on these two estimators scales as \footnote{Tighter regret bounds are derived in Theorems \ref{thm:r1} and \ref{thm:r2}. Here, we only mention the Corollaries of our approach, that are easy to state.}: $ \mathcal{O}\left(  \frac{\lambda(\pmb{\mu}) \mathcal{M}}{\Delta}  \log T  \right)$.

Here, $\mathcal{M}$ is related to the largest pairwise divergence values under the two divergence measures used. $\Delta$ is the gap between the mean rewards of the optimal expert and the second best. $\lambda(\pmb{\mu})$ is a parameter that only depends on the gaps between mean rewards of the optimum experts and various sub-optimal ones. It is a normalized sum of difference in squares of the gaps of adjacent sub-optimal experts ordered by their gaps. Under the assumption that the suboptimal gaps (except that of the second best arm) are uniformly distributed in a bounded interval, we can show that the parameter $\lambda(\pmb{\mu})$ is $O(\log N)$ in expectation. We define this parameter explicitly in Section \ref{sec:results}.

 For the clipped estimator we show that $\mathcal{M}= M^4 \log^2 (1/\Delta)$ where $M$ is the largest pairwise log-divergence associated with the clipped estimator. For the median of means estimator,  $\mathcal{M}= \sigma^4 $ where $\sigma^2$ is the largest pairwise chi-squared divergence. 
 
 Naively treating each expert as an arm would lead to a regret scaling of $\mathcal{O}(N \log T/ \Delta)$. However, this ignores information leakage. Existing instance-independent bounds for contextual bandits scale as $\sqrt{KT \mathrm{poly} \log(N) }$~\cite{agarwal2014taming}. Our problem dependent bounds have a near optimal dependence on $\Delta$ and does not depend on $K$, the numbers of arms. However, it depends on the divergence measure associated with the information leakage in the problem ($M$ or $\sigma$ parameters). Besides our analysis, we empirically show that this divergence based approach rivals or performs better than very efficient heuristics for contextual bandits (like bagging etc.) on real-world data sets.

%

{\bf $(iii)$ (Empirical Validation):} We empirically validate our algorithm on three real world data-sets~\cite{frey1991letter,horton1996probabilistic,stream} against other state of the art contextual bandit algorithms~\cite{langford2008epoch,agarwal2014taming} implemented in Vowpal Wabbit~\cite{wabbit}. In our implementation, we use online training of cost-sensitive classification oracles~\cite{beygelzimer2009offset} to generate the class of stochastic experts. We show that our algorithms have better regret performance on these data-sets compared to the other algorithms.

\section{Related Work}
\label{sec:rwork}
Contextual bandits has been studied in the literature for several decades, starting with the simple setting of discrete contexts~\cite{bubeck2012regret}, to linear contextual bandits~\cite{chu2011contextual} and finally the general experts setting~\cite{dudik2011efficient,agarwal2014taming,langford2008epoch,auer2002nonstochastic,beygelzimer2011contextual}. In this work, we focus on the experts setting. Contextual bandits with experts was first studied in the adversarial setting, where there are algorithms with the optimal regret scaling $\mathcal{O}(\sqrt{KT\log N})$~\cite{auer2002nonstochastic}. 

In this paper, we are more interested in the stochastic version of the problem, where the context and the rewards of the arms are generated from an unknown but fixed distribution. The first strategies to be explored in this setting were explore-then-commit and epsilon-greedy~\cite{langford2008epoch} style strategies that achieve a regret scaling of $\mathcal{O}\left(\sqrt{K \log N} T^{2/3} \right)$ in the instance-independent case. Following this there have been several efforts to design adaptive algorithms that achieve a $\mathcal{O}(\sqrt{KT\mathrm{polylog}(N) })$ instance-independent regret scaling. Notable among these are~\cite{dudik2011efficient,agarwal2014taming}. These algorithms map the contextual bandit problem to supervised learning and assume access to cost-sensitive classification oracles. These algorithms have been heavily optimized in Vowpal Wabbit~\cite{wabbit}. 

We study the contextual bandits with stochastic experts problem, where the experts are not deterministic functions mapping contexts to arms, but are conditional distributions over the arms given a context. We show that we can achieve instance-dependent regret guarantees for this problem, that can scale as $\mathcal{O}\left((\mathcal{M}\log N/\Delta) \log T\right)$ under some assumptions. Here, $\Delta$ is the gap between the mean reward of the best expert and the second best and $\mathcal{M}$ is a divergence term between the experts. Our algorithms are based on importance sampling based estimators which leverage information leakage among stochastic experts. We use an adaptive clipped importance sampling estimator for the mean rewards of the experts, that was introduced in~\cite{pmlr-v70-sen17a}. In~\cite{pmlr-v70-sen17a}, the estimator was studied in a best-arm/pure explore setting, while we study a cumulative regret problem where we need to adjust the parameters of the estimator in an online manner. In addition, we introduce an importance sampling based median of means style estimator in this paper, that can leverage the information leakage among experts.  
\section{Problem Setting and Definitions}
\label{sec:defs}
The general stochastic contextual bandit problem with $K$ arms is defined as a sequential process for $T$ discrete time-steps~\cite{langford2008epoch}, where $T$ is the time-horizon of interest. At each time $t \in \{ 1,2,\cdots,T\}$ nature draws a vector $(x_t, r_1(t),...,r_K(t))$ from an unknown but fixed probability distribution. Here, $r_{i}(t) \in [0,1]$ is the reward of arm $i$. The context vector $x_t \in \mathcal{X}$ is revealed to the policy-designer, whose task is then to choose an arm out the $K$ possibilities. Only the reward $r_{v(t)}(t)$ of the chosen arm $v(t)$, is then revealed to the policy-designer. We will use $r_{v(t)}$ in place of $r_{v(t)}(t)$ for notational convenience. 


\begin{figure}[h]
	\centering
	\includegraphics[width=8cm,height=10cm,keepaspectratio]{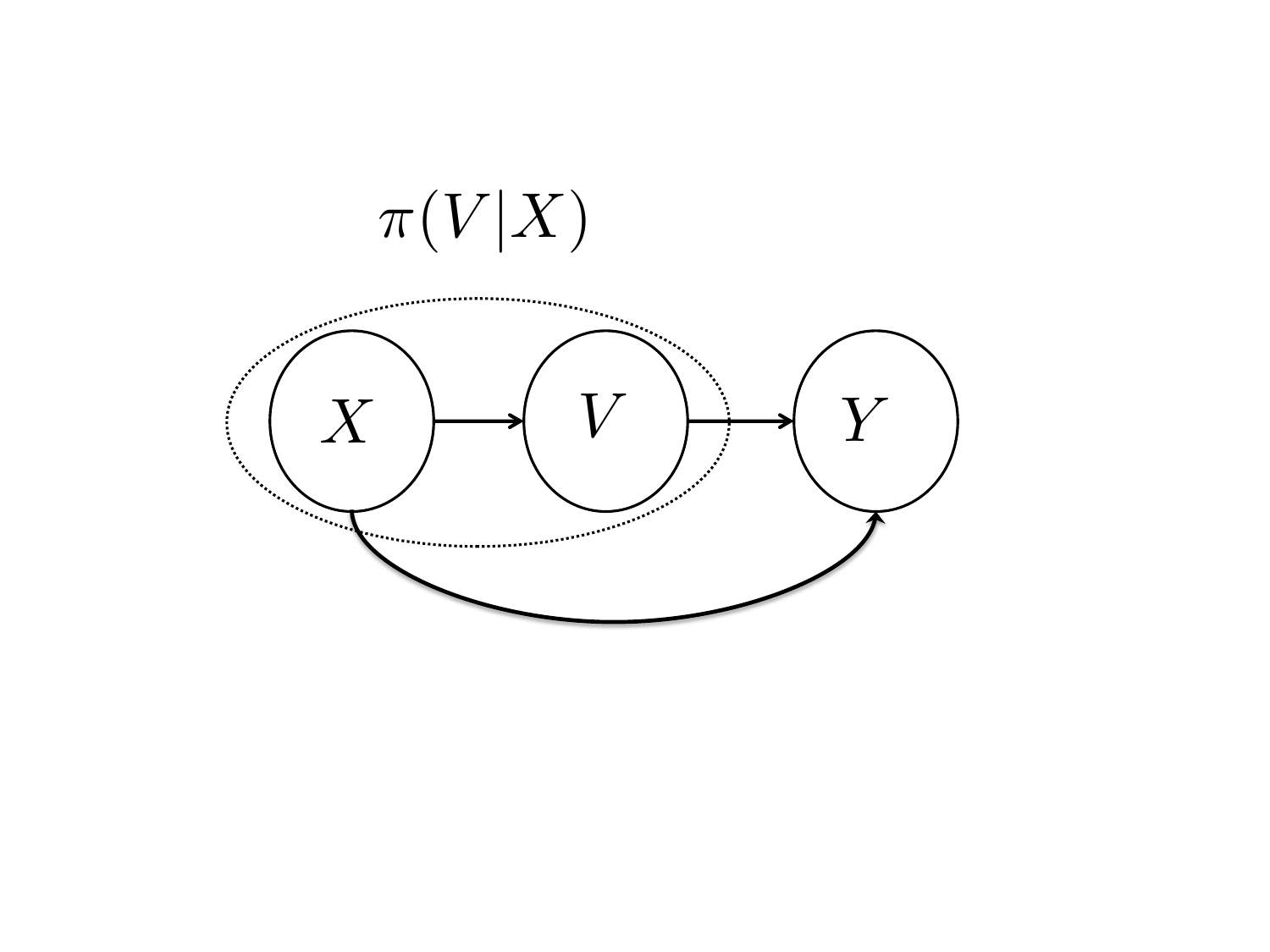}
	\caption{\small Bayesian Network denoting the joint distribution of the random variables at a given time-step, under our contextual bandit setting. $X$ denotes the context, $V$ denotes the chosen arm, while $Y$ denotes the reward from the chosen arm that also depends on the context observed. The distribution of the reward given the chosen arm and the context, and the marginal of the context remain fixed over all time slots. However, the conditional distribution of the chosen arm given the context is dependent on the stochastic expert at that time-step. }
	\label{fig:illustrate}
\end{figure}

{\bf Stochastic Experts: } 
We consider a class of stochastic experts $\Pi = \{\pi_1,\cdots, \pi_N \}$, where each $\pi_i$ is a conditional probability distribution $\pi_{V \vert X}(v \vert x)$ where $V \in [K]$ is the random variable denoting the arm chosen and $X$ is the context. We will use the shorthand $\pi_i(V \vert X)$ to denote the conditional distribution corresponding to expert $i$, for notational convenience. The observation model at each time step $t$ is as follows: {\it (i)} A context $x_t$ is observed. {\it (ii)} The policy-designer chooses a stochastic expert $\pi_{k(t)} \in \Pi$. An arm $v(t)$ is drawn from the probability distribution $\pi_{k(t)}(V \vert x_t)$, by the policy-designer.  {\it (iv)} The stochastic reward $y_t = r_{v(t)}$ is revealed. 

The joint distribution of the random variables $X,V,Y$ denoting the context, arm chosen and reward observed respectively at time $t$, can be modeled by the Bayesian Network shown in Fig.~\ref{fig:illustrate}. The joint distribution factorizes as follows, $p(x,v,y) = p(y \vert v,x ) p(v \vert x)p(x)~\refstepcounter{equation}(\theequation)\label{eq:joint}$,
where $p(y \vert v,x )$ (the reward distribution given the arm and the context), and $p(x)$ (marginal distribution of the context) is determined by the nature's distribution and are fixed for all time-steps $t = 1,2,...,T$. On the other hand $p(v \vert x)$ (distribution of the arm chosen given the context) depends on the expert selected at each round. At time $t$, $p(v \vert x) = \pi_{k(t)}(v \vert x)$ that is the conditional distribution encoded by the stochastic expert chosen at time $t$. Now we are at a position to define the objective of the problem.


{\bf Regret: } The objective in our contextual bandit problem is to perform as well as the best expert in the class of experts. We will define $p_k(x,v,y) \triangleq p(y \vert v,x ) \pi_k(v \vert x)p(x)$ as the distribution of the corresponding random variables when the expert chosen is $\pi_k \in \Pi$. The expected reward of expert $k$ is now denoted by,
 $\mu_k = \EE_{p_k(x,v,y)}[Y],$ where $\EE_{p(.)}$ denotes expectation with respect to distribution $p(.)$. The best expert is given by $k^* = \argmax_{k \in [N]} \mu_k$. The objective is to minimize the \textit{regret} till time $T$, which is defined as $R(T) = \sum_{t = 1}^{T} \EE\left[\mu^* - \mu_{k(t)} \right]$, where $\mu^* = \mu_{k^*}$. Note that this is analogous to the regret definition for the deterministic \textit{expert} setting~\cite{langford2008epoch}. Let us define $\Delta_{k} \triangleq \mu^* - \mu_k$ as the optimality gap in terms of expected reward, for expert $k$. Let $\pmb{\mu} \triangleq \{\mu_1,...,\mu_N \}$.  We further assume that for all $i \in [N]$, $\mu_i \geq \gamma$. Now we will define some divergence metrics that will be important in describing our algorithms and theoretical guarantees. 
 

 \subsection{Divergence Metrics}
 \label{sec:divergence}
 In this section we will define some $f$-divergence metrics that will be important in analyzing our estimators. Similar divergence metrics were defined in~\cite{pmlr-v70-sen17a} to analyze the clipped estimator in~\eqref{eq:est1} in the context of a best arm identification problem. In addition to the divergence metric in~\cite{pmlr-v70-sen17a}, we will also define the chi-square divergence metric which will be useful in analyzing the median of means based estimator~\eqref{eq:est2}. First, we define conditional $f$-divergence. 
 
 \begin{definition}
 	Let $f(\cdot)$ be a non-negative convex function such that $f(1) = 0$.
 	For two joint distributions $p_{X,Y}(x,y)$ and $q_{X,Y}(x,y)$ (and the
 	associated  conditionals), the conditional $f$-divergence $D_{f}(p_{X
 		\vert Y} \Vert q_{X \vert Y})$ is given by: 
 	
 	$D_{f}(p_{X \vert Y} \Vert q_{X \vert Y}) = \EE_{q_{X,Y}} \left[f \left( \frac{p_{X \vert Y}(X \vert Y)}{q_{X \vert Y}(X \vert Y)} \right)\right].$

 \end{definition}
 
 Recall that $\pi_i$ is a conditional distribution of $V$ given $X$. Thus, $D_f(\pi_i \Vert
 \pi_j)$ is the conditional $f$-divergence between the conditional distributions $\pi_i$ and $\pi_j.$ Note that in this definition the marginal distribution of $X$ is the marginal of $X$ given by nature's inherent distribution over the contexts. In this work we will be concerned with two specific $f$-divergence metrics that are defined as follows. 
 
 \begin{definition}
 	\label{def:mij}
 	($M_{ij}$ measure)~\cite{pmlr-v70-sen17a}  Consider the function $f_1(x) = x \exp (x-1) -
 	1$. We define the following log-divergence measure:  $M_{ij} = 1 +
 	\log (1 + D_{f_1} (\pi_i \lVert \pi_j)),$ $\forall i,j
 	\in [N].$ 
 \end{definition}
 
 The $M_{ij}$-measures will be crucial in analyzing one of our estimators (clipped estimator) defined in Section~\ref{sec:estimators}. 
 
 \begin{definition}
 	\label{def:sij}
 	($\sigma_{ij}$ measure) $D_{f_2} (\pi_i \Vert \pi_j)$ is known as the chi-square divergence between the respective conditional distributions, where $f_2(x) = x^2 - 1$. Let $\sigma^2_{ij} = 1 + D_{f_2} (\pi_i \Vert \pi_j)$. 
 \end{definition}

The $\sigma_{ij}$-measures are important in analyzing our second estimator (median of means) defined in Section~\ref{sec:estimators}.

\section{A Meta-Algorithm}
\label{sec:algo}
In this section, we propose a general upper-confidence bound (UCB) style strategy that utilizes the structure of the problem to converge to the best expert much faster than a naive UCB strategy that treats each expert as an arm of the bandit problem. One of the key observations in this framework is that rewards collected under one expert can give us valuable information about the mean under another expert, owing to the Bayesian Network factorization of the joint distribution of $X,V$ and $Y$. We propose two estimators for the mean rewards of different experts, that leverage this information leakage among experts, through importance sampling. These estimators are defined in Section~\ref{sec:estimators}. We propose a meta-algorithm (Algorithm~\ref{alg:dUCB}) that is designed to use these estimators and the corresponding confidence intervals, to control regret. 
\begin{algorithm}
	\caption{D-UCB: Divergence based UCB for contextual bandits with stochastic experts}
	\begin{algorithmic}[1]
		\State For time step $t = 1$, observe context $x_1$ and choose a random expert $\pi \in \Pi$. Play an arm drawn from the conditional distribution $\pi(V \vert x_1)$. 
		\For {$t = 2,...,T$}
		\State Observe context $x_t$
		\State Let $k(t) = \argmax_{k} U_{k}(t-1) \triangleq \hat{\mu}_k(t-1) + s_k(t-1)$. 
		\State \parbox[t]{\dimexpr\linewidth-\algorithmicindent}{Select an arm $v(t)$ from the distribution \\ $ \pi_{k(t)} (V \vert x_t)$.} 
		\State Observe the reward $Y(t)$.
		\EndFor
	\end{algorithmic}
	\label{alg:dUCB}
\end{algorithm}

Here, $\hat{\mu}_k(t)$ denotes an estimate for the mean reward for expert $k$ at time $t$, while $s_k(t)$ denotes the upper confidence bound for the corresponding estimator at time $t$. We propose two estimators that utilize all the samples observed under various experts to provide an estimate for the mean reward under expert $k$. 

The first estimator denoted by $\hat{\mu}_k^{c}(t)$ (Section~\ref{sec:estimators}, Eq.~\eqref{eq:est1}) is a clipped importance sampling estimator inspired by ~\cite{pmlr-v70-sen17a}. If this estimator is used, then $s_k(t)$ is set as in Equation.~\eqref{eq:ucb1}.  

The second estimator denoted by $\hat{\mu}_k^{m}(t)$ (Section~\ref{sec:estimators}, Eq.~\eqref{eq:est2}) is a median of means based importance sampling estimator. If this estimator is used, then $s_k(t)$ is set as in Equation.~\eqref{eq:ucb2}.  

\section{Estimators and Confidence Bounds}
\label{sec:estimators}

In this section we define two estimators for estimating the mean rewards under a given expert. Both these estimators can effectively leverage the information leakage between samples collected under various experts, through importance sampling. One key observation that enables us in doing so is the following equation,
\begin{align}
\label{eq:infoleakage}
\mu_k = \EE_{p_j(x,v,y)} \left[ Y \frac{\pi_k(V \vert X)}{\pi_j(V \vert X)}\right]. 
\end{align}

This has been termed as \textit{information leakage} and has been leveraged before in the literature~\cite{pmlr-v70-sen17a, lattimore2016causal,bottou2013counterfactual} in best-arm identification settings. Recall that the subscript $p_j(x,v,y)$ denotes that the expectation is taken under the joint distribution in~\eqref{eq:joint}, where $p(v \vert x) = \pi_j(v \vert x)$ i.e. under the distribution imposed by expert $\pi_j$. However, even under this distribution we can technically estimate the mean reward under expert $\pi_k$. The above equation is the motivation behind our estimators. Now, we will introduce our first estimator.  

{\bf Clipped Estimator: } This estimator was introduced in~\cite{pmlr-v70-sen17a} in the context of a pure exploration problem. Here, we analyze this estimator in a cumulative regret setting, where the parameters of the estimator need to be adjusted differently. Let $n_i(t)$ denote the number of times expert $i$ has been invoked by Algorithm~\ref{alg:dUCB} till time $t$, for all $i \in [N]$. We define the fraction $\nu_i(t) \triangleq n_i(t)/t$. We will also define $\mathcal{T}_i(t)$ as the subset of time-steps among $\{1,..,t\}$, in which the expert $i$ was selected. Let $\hat{\mu}^c_k(t)$ be the estimate of the mean reward of expert $k$ from samples collected till time $t$. The estimator is given by,

\begin{align*}
\label{eq:est1}
\hat{\mu}^c_{k}(t)  =& \frac{1}{Z_{k}(t)}\sum_{j = 1}^{N} \sum_{s \in
	\mathcal{T}_j(t)} \frac{1}{M_{kj}}Y_j(s)\frac{\pi_k(V_j(s) \vert
	X_j(s))}{\pi_j(V_j(s) \vert X_j(s))} 
\times \mathds{1}\left\{ \frac{\pi_k(V_j(s) \vert X_j(s))}{\pi_j(V_j(s) \vert X_j(s))} \leq 2\log(2/\epsilon(t))M_{kj}\right\}. \numberthis
\end{align*}

Here, $A_j(s)$ is the value of the random variable $A$ at time $s$ drawn using expert $j$, where $A$ can be the r.v's $X$,$Y$ or $V$. We set $Z_{k}(t) = \sum_{j} n_j(t)/M_{kj}$. $\epsilon(t)$ is an adjustable term which controls the bias-variance trade-off for the estimator. 

{\bf Intuition:} The clipped estimator is a weighted average of the samples collected under different experts, where each sample is scaled by the importance ratio as suggested by~\eqref{eq:infoleakage}. We also clip the importance ratios which are larger than a clipper level. This clipping introduces bias but decreases variance. The clipper level is carefully chosen to trade-off bias and variance. The clipper level values and the weights are dependent on the divergence terms $M_{kj}$'s. When the divergence $M_{kj}$ is large, it means that the samples from expert $j$ is not valuable for estimating the mean for expert $k$. Therefore, a weight of $1/M_{kj}$ is applied. Similarly, the clipper level is set at $2\log(2/\epsilon(t))M_{kj}$ to restrict the aditive bias to $\epsilon(t)$.  

The upper confidence term in Algorithm~\ref{alg:dUCB} for the estimator $\hat{\mu}^c_{k}(t)$ is chosen as,
\begin{align}
\label{eq:ucb1}
s^c_{k}(t) = \frac{3}{2}\beta(t) 
\end{align}
 at time $t$, where $\beta(t)$ is such that, $\beta(t) = C w \left(\frac{\sqrt{c_1 t \log t}}{Z_k(t)}\right).$  We set $c_1 = 1$ and $C = 16M/\gamma$ in our analysis. The function $w(\cdot)$ is defined as $w(x) = y$ s.t $y/\log(2/y) = x$.

{\bf Median of Means Estimator: } Now we will introduce our second estimator which is based on the well-known median of means technique of estimation. Median of means estimators are popular for statistical estimation when the underlying distributions are heavy-tailed~\cite{bubeck2013bandits}. The estimator for the mean under the $k^{th}$ expert at time $t$ is obtained through the following steps:  ($i$) We divide the total samples into $l(t) = \floor{ c_2 \log (1/\delta(t))}$ groups of equal size (while throwing away extra samples) where the partition respects the order in which the samples are recieved. We choose $c_2 = 8$ for our analysis. Let us index the groups as $r = 1,2...,l(t)$. ($ii$) We calculate the empirical mean of expert $k$ from the samples in each group through importance sampling. ($iii$) The median of these means is our estimator. 

Now we will setup some notation. Let $\mathcal{T}_{j}^{(r)} \subseteq \{(r-1)l(t) + 1, \cdots, rl(t) \}$ be the indices of the samples from expert $j$ that lie in group $r$. Let $W_k(r,t) = \sum_{i}n_i(r,t)/\sigma_{ki}$, where $n_i(r,t)$ is the number of samples from expert $i$ in group $r$. Let $n(r,t) = \sum_i n_i(r,t)$. Then the mean of expert $k$ estimated from group $r$ is given by,

\begin{align*}
\label{eq:momm}
&\hat{\mu}_{k}^{(r)}(t)  = \frac{1}{W_k(r,t)}\sum_{j = 1}^{N} \sum_{s \in
	\mathcal{T}^{(r)}_j} \frac{1}{\sigma_{kj}}Y_j(s)\frac{\pi_k(V_j(s) \vert
	X(s))}{\pi_j(V_j(s) \vert X_j(s))}. \numberthis
\end{align*} 

The median of means estimator for expert $k$ is then given by,
\begin{align}
\label{eq:est2}
\hat{\mu}^m_{k}(t) \triangleq  \mathrm{median} \left(\hat{\mu}_{k}^{(1)}(t), \cdots, \hat{\mu}_{k}^{(l(t))}(t) \right).
\end{align}

{\bf Intuition:} The mean of every group is a weighted average of samples from each expert, rescaled by the importance ratios. This is similar to the clipped estimator in Eq.~\eqref{eq:est1}. However, here the importance ratios are not clipped at a particular level. In this estimator, the bias-variance trade-off is controlled by taking the median of means from $l(t)$ groups. The number of groups $l(t)$ needs to be carefully set in-order to control the bias-variance trade-off.  

The upper confidence bound used in conjunction with this estimator at time $t$ is given by,

\begin{align}
\label{eq:ucb2}
s^m_k(t) = \frac{1}{W_k(t)}\sqrt{\frac{c_3 \log (1/\delta(t))}{t}}
\end{align}
where $W_k(t) = \min_{r \in [l(t)]} W_k(r,t)/n(r,t)$ and $\delta(t)$ is set as $1/t^2$ in our algorithm. We set the constant $c_3 = 32 \sigma^2$ for our analysis, where $\sigma = \max_{ij} \sigma_{ij}$.
\section{Theoretical Results}
\label{sec:results}
In this section, we provide \textit{instance dependent} regret guarantees for Algorithm~\ref{alg:dUCB} for the two estimators proposed - a) The clipped estimator~\eqref{eq:est1} and b) The median of means estimator~\eqref{eq:est2}. Let $\Delta = \min_{k \neq k^*} \Delta_k$ be the gap in the expected reward between the optimum expert and the second best. We define a parameter $\lambda(\pmb{\mu})$, later in the section, that depends only on the gaps of the expected rewards of various experts from the optimal one. 

For the Algorithm~\ref{alg:dUCB} that uses the clipped estimator, regret scales as $\mathcal{O} (\lambda(\pmb{\mu}) M^4 \log^2(6/\Delta) \log T/\Delta )$. Similarly, for the case of the median of means estimator, regret scales as $\mathcal{O} (\lambda(\pmb{\mu}) \sigma^4 \log T/\Delta )$. Here $M$ is the maximum log-divergence and $\sigma^2$ is the maximum chi-square divergence between two experts, respectively.

When the gaps between the optimum expert and sub-optimal ones are distributed uniformly at random in $[\delta_2, 1]$ $(\delta_2 > 0)$, we show that the $\lambda(\pmb{\mu})$ parameter is at most $O(\log N)$ in expectation. In contrast, if the experts were used as separate arms, a naive application of UCB-1~\cite{auer2002using} bounds would yield a regret scaling of $\mathcal{O} \left(\frac{N}{\Delta} \log T \right)$. This can be prohibitively large when the number of experts are large.  

%
%
For ease of exposition of our results, let us re-index the experts using indices $\{(1),(2),...,(N) \}$ such that $0 = \Delta_{(1)} \leq \Delta_{(2)} \leq ... \leq \Delta_{(N)}$. The regret guarantees for our clipped estimator are provided under the following assumption. 

\begin{assumption}
	\label{assump1}
	Assume the log-divergence terms~\eqref{def:mij} are bounded for all $i,j \in [N]$. Let $M = \max_{i,j} M_{ij}$. 
\end{assumption}

Now we are at a position to present one of our main theorems that provides regret guarantees for Algorithm~\ref{alg:dUCB} using the estimator~\eqref{eq:est1}. 

\begin{theorem}
	\label{thm:r1}
	Suppose Assumption~\ref{assump1} holds. Then the regret of Algorithm~\ref{alg:dUCB} at time $T$ using estimator~\eqref{eq:est1},  is bounded as follows:
	\begin{align*}
	&R(T) \leq \frac{C_1M^4 \log^2 (96M/(\gamma\Delta _{(N)})) \log T}{\gamma^2\Delta_{(N)}} + \frac{\pi^2}{3} \left(\sum_{i = 2}^{N} \Delta_{(i)} \right) +  \sum_{k = 2}^{N-1} \frac{C_1M^4 \log^2 (96M/(\gamma\Delta _{(k)})) \log T}{\gamma^2\Delta_{(k))}} \left(1 - \frac{\alpha (\Delta_{(k)})}{\alpha(\Delta_{(k+1)})} \right) 
	\end{align*}
when $T$ is such that $\frac{9C^2 \log T \log^2\left( \nicefrac{6C}{\Delta_k}  \right)}{\Delta_k^2} \geq T_1$ for all $k \neq k^*$ and $T_1 := \min\left\{ t: \beta'(t) := Cw\left(\frac{\sqrt{c_1t\log t}}{t} \right)\leq \gamma\right\}$. 
	Here, $C_1$ is an universal constant and $\alpha(x) = \frac{x^2}{\log ^2 (96M/(x\gamma))}$.
\end{theorem}

We defer the proof of Theorem~\ref{thm:r1} to Appendix~\ref{sec:clipped}. We now present Theorem~\ref{thm:r2} that provides regret guarantees for Algorithm~\ref{alg:dUCB} using the estimator~\eqref{eq:est2}. The theorem holds under the following assumption. 

\begin{assumption}
	\label{assump2}
	Assume the chi-square terms~\eqref{def:sij} are bounded for all $i,j \in [N]$. Let $\sigma = \max_{i,j} \sigma_{ij}$. 
\end{assumption}

\begin{theorem}
	\label{thm:r2}
	Suppose Assumption~\ref{assump2} holds. Then the regret of Algorithm~\ref{alg:dUCB} at time $T$ using estimator~\eqref{eq:est2},  is bounded as follows:
	\begin{align*}
	&R(T) \leq \frac{C_2\sigma^4 \log T}{\Delta_{(N)}} +  \sum_{k = 2}^{N-1} \frac{C_2\sigma^4 \log T}{\Delta_{(k)}} \left(1 - \frac{\Delta_{(k)}^2}{\Delta_{(k+1)}^2} \right) + \frac{\pi^2}{3} \left(\sum_{i = 2}^{N} \Delta_{(i)} \right) 
	\end{align*}
	Here, $C_2$ is an universal constant. 
\end{theorem}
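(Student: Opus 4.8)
The plan is to run a UCB-style regret decomposition, but to exploit the fact that—because of information leakage—the confidence half-width $s^m_k(t)$ of \emph{every} expert shrinks at a rate governed by the \emph{total} number of rounds $t$, rather than by the number of times that expert alone has been selected. This is precisely what converts the naive per-arm count into the telescoping bound in the statement.

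\emph{Step 1 (an allocation-free bound on the width).} First I would bound $s^m_k(t)$ uniformly over experts and over the random allocation of plays. By Assumption~\ref{assump2} we have $\sigma_{ki}\le\sigma$ for all $i$, so each $1/\sigma_{ki}\ge 1/\sigma$, and hence every ratio $W_k(r,t)/n(r,t)=\big(\sum_i n_i(r,t)/\sigma_{ki}\big)/\big(\sum_i n_i(r,t)\big)$, being a convex combination of the numbers $1/\sigma_{ki}$, is at least $1/\sigma$. Therefore $W_k(t)=\min_r W_k(r,t)/n(r,t)\ge 1/\sigma$, and with $\delta(t)=1/t^2$ the width in Eq.~\eqref{eq:ucb2} satisfies $s^m_k(t)\le\sigma\sqrt{2c_3\log t/t}$ simultaneously for all $k$. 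This is the crucial decoupling: the width now depends only on $t$ and $\sigma$.

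\emph{Step 2 (clean event and elimination times).} Next I define, for the decision at round $t$, the failure events $A_t=\{\hat\mu^m_{k^*}(t-1)<\mu^*-s^m_{k^*}(t-1)\}$ and $B_{k,t}=\{\hat\mu^m_{k}(t-1)>\mu_k+s^m_k(t-1)\}$, each of probability at most $\delta(t-1)$ by Lemma~\ref{lem:mom}. If a suboptimal $k$ is selected at $t$ while neither $A_t$ nor $B_{k,t}$ occurs, the UCB rule of Algorithm~\ref{alg:dUCB} forces $\mu^*\le\hat\mu^m_k(t-1)+s^m_k(t-1)\le\mu_k+2s^m_k(t-1)$, i.e.\ $\Delta_k\le 2s^m_k(t-1)$. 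Combined with Step 1 this yields $t-1\le 8c_3\sigma^2\log T/\Delta_k^2=:\tau_k$ (absorbing the off-by-one and $\log(t-1)\le\log T$ into a universal constant). So on the clean event expert $k$ is never selected after round $\tau_k$; after re-indexing with $\Delta_{(2)}\le\cdots\le\Delta_{(N)}$ these elimination times nest as $\tau_{(N)}\le\cdots\le\tau_{(2)}$.

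\emph{Step 3 (telescoping and failure terms).} On the clean event, at any round $t$ with $\tau_{(k+1)}<t\le\tau_{(k)}$ (set $\tau_{(N+1)}=0$) every still-selectable suboptimal expert has gap at most $\Delta_{(k)}$, so the instantaneous regret is at most $\Delta_{(k)}$. Summing the interval lengths gives
\[
\sum_{k=2}^{N}\Delta_{(k)}\big(\tau_{(k)}-\tau_{(k+1)}\big)=\frac{C_2\sigma^2\log T}{\Delta_{(N)}}+\sum_{k=2}^{N-1}\frac{C_2\sigma^2\log T}{\Delta_{(k)}}\Big(1-\frac{\Delta_{(k)}^2}{\Delta_{(k+1)}^2}\Big),
\]
which is exactly the first two terms of the theorem (the algebra is immediate from $\tau_{(k)}=C_2\sigma^2\log T/\Delta_{(k)}^2$). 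The failure events are handled termwise: the $A_t$ events contribute at most $\Delta_{(N)}\sum_t\delta(t-1)\le\Delta_{(N)}\pi^2/6$, and the $B_{k,t}$ events contribute at most $\sum_{k\ne k^*}\Delta_k\sum_t\delta(t-1)\le\frac{\pi^2}{6}\sum_{i=2}^N\Delta_{(i)}$; adding these and using $\Delta_{(N)}\le\sum_{i=2}^N\Delta_{(i)}$ produces the final $\frac{\pi^2}{3}\sum_{i=2}^N\Delta_{(i)}$ term.

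\emph{Main obstacle.} I expect the crux to be Steps 1 and 3 jointly: one must recognize that information leakage makes $s^m_k(t)$ depend on the \emph{aggregate} sample count, so that suboptimal experts are provably eliminated in decreasing order of their gaps and the regret becomes the integral of the largest surviving gap over time. The uniform bound $W_k(t)\ge 1/\sigma$ is exactly what legitimizes replacing the usual per-arm pull count by the global round index $t$; getting the constant and the $\log(t-1)$ versus $\log T$ bookkeeping right is then routine.
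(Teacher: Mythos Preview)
Your proposal is correct and follows essentially the same route as the paper. The paper's proof also hinges on the bound $1/W_k(t)\le\sigma$ (your Step~1), uses it to show that each suboptimal expert $k$ can only be selected on a clean event while $t\le C\sigma^2\log T/\Delta_k^2$ (your Step~2, their Lemmas~\ref{lem:ubound_mom}--\ref{lem:lbound_mom}), and then performs exactly the same telescoping over the nested elimination times $T_{(N)}\le\cdots\le T_{(2)}$ with the $2/t^2$ bad-event contributions summed to give the $\tfrac{\pi^2}{3}\sum_i\Delta_{(i)}$ term (your Step~3).
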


The proof of Theorem~\ref{thm:r2} has been deferred to Appendix~\ref{sec:mom}. Now, we will delve deeper into the instance dependent terms in Theorems~\ref{thm:r1} and~\ref{thm:r2}. The proofs of Theorem~\ref{thm:r1} and \ref{thm:r2} imply the following corollary.

\begin{corollary}
	\label{cor:simple}
	Let $\lambda(\pmb{\mu}) \triangleq 1 + \sum_{k = 2}^{N-1}  \left(1 - \frac{\Delta_{(k)}^2}{\Delta_{(k+1)}^2} \right) $. We have the following regret bounds: 
	
	$(i)$ For Algorithm~\ref{alg:dUCB} with estimator~\eqref{eq:est1}, $R(T) \leq  \mathcal{O} \left(\frac{M^4 \log^2 (96M/(\gamma\Delta _{(2)})) \log T}{\Delta_{(2)}} \min \left(\lambda(\pmb{\mu}) , \frac{1}{\Delta _{(2)}} \right) \right)$ if for all $k$, $\frac{\alpha(\Delta_k)}{\alpha(\Delta_{k+1})} \geq \frac{\Delta^2_k}{\Delta^2_{k+1}}$ and the condition on $T$ from Theorem~\ref{thm:r1} holds. 
	
	$(ii)$ Similarly for Algorithm~\ref{alg:dUCB} with estimator~\eqref{eq:est2}, $R(T) \leq  \mathcal{O} \left(\frac{ \sigma^4  \log T}{\Delta _{(2)}} \min \left(\lambda(\pmb{\mu} ) , \frac{1}{\Delta _{(2)}} \right) \right)$.
\end{corollary}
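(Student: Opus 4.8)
The plan is to read off both parts of Corollary~\ref{cor:simple} directly from the regret expressions in Theorems~\ref{thm:r1} and~\ref{thm:r2}, exploiting that the two bounds share a common shape. Writing $a_k \triangleq \Delta_{(k)}$, each bound is, up to the additive lower-order term $\frac{\pi^2}{3}\sum_{i=2}^N \Delta_{(i)}$, of the form $\Phi(a_N) + \sum_{k=2}^{N-1}\Phi(a_k)\big(1 - \tfrac{\rho(a_k)}{\rho(a_{k+1})}\big)$, where $\Phi$ is decreasing, $\rho$ is increasing, and $\Phi(x)\rho(x) = C x$ (for the median of means bound $\Phi(x) = C_2\sigma^2\log T/x$, $\rho(x)=x^2$; for the clipped bound $\Phi(x) = C_1 M^2 \log^2(6/x)\log T/x$, $\rho(x)=\gamma(x)$). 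I would dispose of the lower-order term first: since each $\Delta_{(i)}\le 1$ it is at most $\frac{\pi^2}{3}N$, a constant in $T$ that is dominated by the $\log T$ leading terms and absorbed into the $\mathcal{O}(\cdot)$.

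For part $(ii)$ the first (``$\lambda$'') branch is immediate. Since $\Phi$ is decreasing and $a_k \ge a_2 = \Delta_{(2)}$, I bound $\Phi(a_k)\le \Phi(\Delta_{(2)}) = C_2\sigma^2\log T/\Delta_{(2)}$, pull this factor out, and recognize the remaining sum $\sum_{k=2}^{N-1}(1 - \Delta_{(k)}^2/\Delta_{(k+1)}^2)$ as exactly $\lambda(\pmb{\mu})-1$; together with $\Phi(a_N)\le\Phi(\Delta_{(2)})$ this yields $\mathcal{O}\big(\tfrac{\sigma^2\log T}{\Delta_{(2)}}\lambda(\pmb{\mu})\big)$. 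For the second (``$1/\Delta_{(2)}$'') branch I would instead show $\lambda(\pmb{\mu}) = \mathcal{O}(1/\Delta_{(2)})$ and feed this back into the first branch. This uses the elementary inequality $1-r^2 \le 2(1-r)$ with $r = \Delta_{(k)}/\Delta_{(k+1)}\in[0,1]$, after which the sum telescopes: $\lambda(\pmb{\mu})-1 \le 2\sum_k\big(1-\tfrac{\Delta_{(k)}}{\Delta_{(k+1)}}\big) \le \tfrac{2}{\Delta_{(2)}}\sum_k(\Delta_{(k+1)}-\Delta_{(k)}) = \tfrac{2(\Delta_{(N)}-\Delta_{(2)})}{\Delta_{(2)}} \le \tfrac{2}{\Delta_{(2)}}$. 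Taking the minimum of the two branches gives part $(ii)$.

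For part $(i)$ the same factoring applies because $\Phi(x) = C_1 M^2 \log^2(6/x)\log T / x$ is again decreasing, producing the claimed prefactor $\tfrac{M^2\log^2(6/\Delta_{(2)})\log T}{\Delta_{(2)}}$. The difficulty here — and the step I expect to be the main obstacle — is that the telescoping weights now involve $\rho=\gamma$, so the summand is $1-\tfrac{\gamma(a_k)}{\gamma(a_{k+1})}$ rather than the clean $1-\tfrac{a_k^2}{a_{k+1}^2}$ that appears in the definition of $\lambda(\pmb{\mu})$; in fact $1-\tfrac{\gamma(a_k)}{\gamma(a_{k+1})}\ge 1-\tfrac{a_k^2}{a_{k+1}^2}$, so the naive comparison goes the wrong way. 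My plan is to write $\tfrac{\gamma(a_k)}{\gamma(a_{k+1})} = \tfrac{a_k^2}{a_{k+1}^2}\cdot\tfrac{\log^2(6/a_{k+1})}{\log^2(6/a_k)}$ and use $1-uw = (1-u)+u(1-w) \le (1-u)+(1-w)$ to split the sum into a $\Delta^2$-part, giving $\lambda(\pmb{\mu})-1$, and a logarithmic-ratio part $\sum_k\big(1-\tfrac{\log^2(6/a_{k+1})}{\log^2(6/a_k)}\big)$.

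I would then control this logarithmic part by reusing $1-r^2\le 2(1-r)$ with $r = \log(6/a_{k+1})/\log(6/a_k)\in[0,1]$ and telescoping, noting $\log(6/a_k)\ge \log 6$ for $a_k\le 1$; this part is $\mathcal{O}(\log(6/\Delta_{(2)}))$ and gets absorbed into the leading $\log^2(6/\Delta_{(2)})$ coefficient, which is precisely why the corollary can be stated with the simpler $\lambda(\pmb{\mu})$. The second branch of part $(i)$ then mirrors $(ii)$: I bound the $\gamma$-based telescoping sum by $\mathcal{O}(1/\Delta_{(2)})$ using $\sqrt{\gamma(x)} = x/\log(6/x)$ (increasing, so $1-\tfrac{\gamma(a_k)}{\gamma(a_{k+1})}\le 2\big(1-\tfrac{\sqrt{\gamma(a_k)}}{\sqrt{\gamma(a_{k+1})}}\big)$) and telescoping as before. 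Verifying that the stray $\log(6/\cdot)$ factors in this clipped analysis collapse cleanly into the stated prefactor, rather than leaving an extra logarithmic factor, is the delicate bookkeeping I would need to carry out carefully.
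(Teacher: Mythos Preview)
Your treatment of part $(ii)$ is correct. The $\lambda(\pmb{\mu})$-branch is exactly how the paper reads it off Theorem~\ref{thm:r2}, and your $1/\Delta_{(2)}$-branch --- proving directly that $\lambda(\pmb{\mu})=\mathcal{O}(1/\Delta_{(2)})$ via $1-r^2\le 2(1-r)$ and a telescoping sum --- is a valid alternative to the paper's route, which instead returns to the intermediate inequality~\eqref{eq:uselater2} and bounds the whole sum crudely by $\Delta_{(N)}T_2\le T_2$.

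Part $(i)$, however, has a genuine gap. After you factor out $\Phi(\Delta_{(2)})=C_1M^2\log^2(6/\Delta_{(2)})\log T/\Delta_{(2)}$, the bracket you obtain is $\lambda(\pmb{\mu})-1$ plus the ``logarithmic part'' $\sum_k\bigl(1-\log^2(6/a_{k+1})/\log^2(6/a_k)\bigr)$. Your telescoping argument correctly shows this extra piece is $\mathcal{O}(\log(6/\Delta_{(2)}))$, but it does \emph{not} get absorbed: it sits \emph{inside} the bracket, so it contributes $M^2\log^3(6/\Delta_{(2)})\log T/\Delta_{(2)}$, which is not $\mathcal{O}\bigl(\tfrac{M^2\log^2(6/\Delta_{(2)})\log T}{\Delta_{(2)}}\min(\lambda(\pmb{\mu}),1/\Delta_{(2)})\bigr)$ in general (take $N=2$, so $\lambda(\pmb{\mu})=1$, with $\Delta_{(2)}$ small). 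The same stray logarithm appears in your second branch: telescoping on $\sqrt{\gamma}$ gives a denominator $\sqrt{\gamma(\Delta_{(2)})}=\Delta_{(2)}/\log(6/\Delta_{(2)})$, so the sum is $\mathcal{O}(\log(6/\Delta_{(2)})/\Delta_{(2)})$, not $\mathcal{O}(1/\Delta_{(2)})$.

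The paper avoids this entirely, and the fix cannot be done from the statement of Theorem~\ref{thm:r1} alone --- note the corollary is advertised as following from the \emph{proofs}. They go back to Lemma~\ref{lem:lbound} and observe that since $\Delta_{(2)}\le\Delta_k$, the conclusion still holds with the looser threshold $T_k=\tfrac{144M^2\log^2(6/\Delta_{(2)})\log T}{\Delta_k^2}$, i.e.\ with $\Delta_{(2)}$ replacing $\Delta_k$ inside the logarithm. With this redefinition the ratio $T_{k+1}/T_k$ is exactly $\Delta_{(k)}^2/\Delta_{(k+1)}^2$, the $\gamma$ disappears, and the chain leading to~\eqref{eq:uselater} yields $\Phi(\Delta_{(2)})\lambda(\pmb{\mu})$ directly; the $1/\Delta_{(2)}$-branch then comes from $\Delta_{(N)}T_2\le T_2$ as in part $(ii)$.
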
 

Corollary~\ref{cor:simple} leads us to our next result. In Corollary~\ref{lem:mu} we show that when the $\Delta$ gaps are uniformly distributed, then the $\lambda(\pmb{\mu})$ is $O(\log N)$, in expectation.

\begin{corollary}
	\label{lem:mu}
	Consider a generative model where $\Delta_{(3)}\leq...\leq\Delta_{(N)}$ are the order statistics of $N-2$ random variables drawn i.i.d uniform over the interval $[\Delta_{(2)},1]$. Let $p_{\Delta}$ denote the measure over these $\Delta$'s. Then we have the following:
	
	($i$) For Algorithm~\ref{alg:dUCB} with estimator~\eqref{eq:est1}, $\EE_{p_{\Delta}} \left[ R(T)\right] = \mathcal{O} \left(\frac{M^4 \log N \log^2(1/\Delta_{(2)}) \log T }{\Delta_{(2)}} \right)$.
	
	($ii$) For Algorithm~\ref{alg:dUCB} with estimator~\eqref{eq:est2}, $\EE_{p_{\Delta}} \left[ R(T)\right] = \mathcal{O} \left(\frac{\sigma^4 \log N \log T }{\Delta_{(2)}} \right)$.
\end{corollary}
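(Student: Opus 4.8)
The plan is to reduce the statement to the single expectation estimate $\EE_{p_{\Delta}}[\lambda(\pmb{\mu})] = O(\log N)$ and then feed it into Corollary~\ref{cor:simple}. Under the generative model, $\Delta_{(2)}$ is the fixed left endpoint of the sampling interval, so the prefactors $\frac{M^2 \log^2(6/\Delta_{(2)})\log T}{\Delta_{(2)}}$ (for estimator~\eqref{eq:est1}) and $\frac{\sigma^2 \log T}{\Delta_{(2)}}$ (for estimator~\eqref{eq:est2}) are deterministic. Using $\min(\lambda(\pmb{\mu}), 1/\Delta_{(2)}) \leq \lambda(\pmb{\mu})$, taking expectations in Corollary~\ref{cor:simple}, and pulling out these deterministic prefactors immediately yields parts $(i)$ and $(ii)$ (absorbing $\log(6/\Delta_{(2)}) \asymp \log(1/\Delta_{(2)})$ into the $O(\cdot)$), once the $O(\log N)$ bound on $\EE[\lambda(\pmb{\mu})]$ is in hand. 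Thus the whole proof rests on controlling $\EE[\lambda(\pmb{\mu})]$.

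For that, I would first linearize each summand. Writing $1 - \frac{\Delta_{(k)}^2}{\Delta_{(k+1)}^2} = \frac{(\Delta_{(k+1)}-\Delta_{(k)})(\Delta_{(k+1)}+\Delta_{(k)})}{\Delta_{(k+1)}^2}$ and using $\Delta_{(k)} \leq \Delta_{(k+1)}$ gives the clean upper bound $1 - \frac{\Delta_{(k)}^2}{\Delta_{(k+1)}^2} \leq \frac{2(\Delta_{(k+1)} - \Delta_{(k)})}{\Delta_{(k+1)}}$. This replaces the awkward ratio of squares by a normalized spacing of consecutive order statistics, which is exactly the object that order-statistic theory handles cleanly.

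Next I would compute the conditional expectation of each normalized spacing. Re-indexing the $N-2$ i.i.d.\ uniform draws on $[\Delta_{(2)},1]$ as $U_{(1)} \leq \cdots \leq U_{(N-2)}$ with $U_{(i)} = \Delta_{(i+2)}$, the key classical fact is that, conditioned on $U_{(i+1)} = y$, the lower order statistics $U_{(1)}, \ldots, U_{(i)}$ are distributed as the order statistics of $i$ i.i.d.\ uniforms on $[\Delta_{(2)}, y]$. Hence $U_{(i)}$ is the maximum of $i$ such uniforms, with conditional mean $\Delta_{(2)} + (y - \Delta_{(2)})\frac{i}{i+1}$. This gives
$$\EE\!\left[\frac{U_{(i+1)} - U_{(i)}}{U_{(i+1)}} \,\Big|\, U_{(i+1)} = y\right] = \frac{(y-\Delta_{(2)})/(i+1)}{y} \leq \frac{1}{i+1},$$
and therefore $\EE\big[1 - \Delta_{(k)}^2/\Delta_{(k+1)}^2\big] \leq \frac{2}{k-1}$ for each $k \geq 3$, while the single boundary term $k=2$, namely $1 - \Delta_{(2)}^2/\Delta_{(3)}^2$, is bounded by $1$ deterministically.

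Summing these bounds collapses the expression into a harmonic series, $\EE[\lambda(\pmb{\mu})] \leq 2 + 2\sum_{k=3}^{N-1}\frac{1}{k-1} = O(\log N)$, completing the argument. The main obstacle, and the only genuinely non-mechanical step, is the conditional order-statistic computation above: one must correctly identify the conditional law of the lower order statistics given $U_{(i+1)}$ and exploit the resulting $\frac{1}{i+1}$ factors. The subtlety is that the naive deterministic telescoping $\sum_k \frac{\Delta_{(k+1)}-\Delta_{(k)}}{\Delta_{(k+1)}} \leq \log(1/\Delta_{(2)})$ only yields the weaker $O(\log(1/\Delta_{(2)}))$ bound and fails to exhibit the $\log N$ scaling; it is precisely passing to expectations and using the conditional means that converts the bound into a harmonic sum and produces the sharper $O(\log N)$ dependence.
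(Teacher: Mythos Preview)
Your proposal is correct and shares the paper's high-level plan: invoke Corollary~\ref{cor:simple}, note that the prefactors are deterministic under the generative model, and reduce everything to $\EE_{p_{\Delta}}[\lambda(\pmb{\mu})]=O(\log N)$. The execution of that last step, however, differs. The paper first applies Jensen's inequality,
\[
1-\EE\!\left[\tfrac{\Delta_{(k)}^2}{\Delta_{(k+1)}^2}\right]\le 1-\left(\EE\!\left[\tfrac{\Delta_{(k)}}{\Delta_{(k+1)}}\right]\right)^{2},
\]
and then computes $\EE[\Delta_{(k)}/\Delta_{(k+1)}]$ by writing down the explicit joint density of two adjacent uniform order statistics and integrating, obtaining a lower bound of the form $k/(k+1)$; squaring and summing yields the same harmonic series you reach. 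Your route instead linearizes pointwise via $1-(x/y)^2\le 2(y-x)/y$ and evaluates the expected normalized spacing by conditioning on the larger order statistic, using that the smaller ones are then i.i.d.\ uniforms on the truncated interval. Both arguments land on $\sum_k c/k=O(\log N)$. Your conditioning argument is a touch more elementary (no double integral over a joint density and no Jensen step), while the paper's route is more mechanical once the density is on the page; the paper's constant is marginally tighter ($2/(k+1)$ versus your $2/(k-1)$), but this is irrelevant at the $O(\log N)$ level.
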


%
%
%

\begin{remark}
Note that our guarantees do not have any term containing $K$ - the number of arms. This dependence is implicitly captured in the divergence terms among the experts. In fact when the number of arms $K$ is very large, we expect our divergence based algorithms to perform comparatively better than other algorithms, whose guarantees explicitly depend on $K$. This phenomenon is observed in practice in our empirical validation on real world data-sets in Section~\ref{sec:sims}. We also show empirically, that the term $\lambda(\pmb{\mu})$ grows very slowly with the number of experts on real-world data-sets. This empirical result is included in Appendix~\ref{sec:more_emp}. 
\end{remark}

\section{Empirical Results}
\label{sec:sims}

\begin{figure*}
	\centering
	
	\subfloat[][]{\includegraphics[width = 0.29\linewidth]{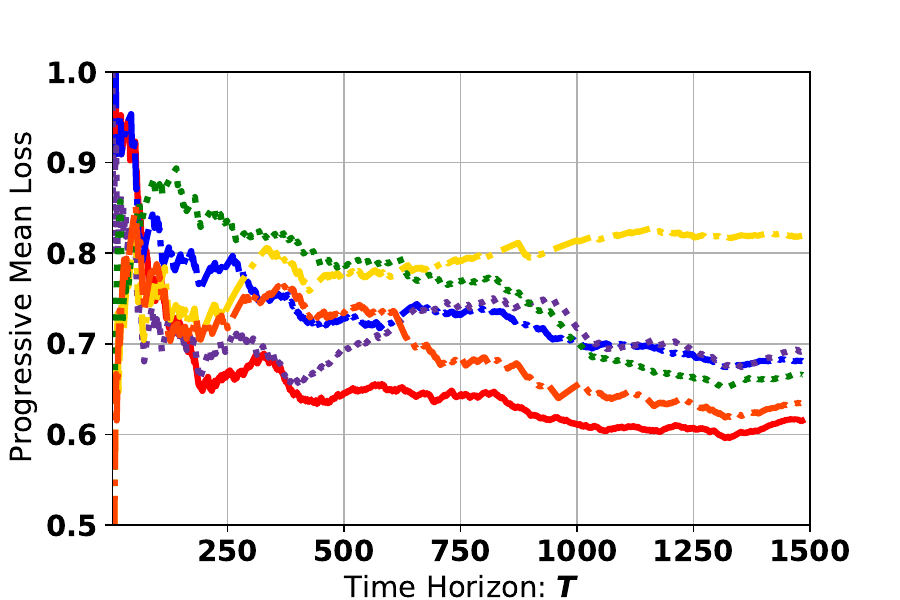}\label{fig:drug}} \hfill
	\subfloat[][]{\includegraphics[width = 0.29\linewidth]{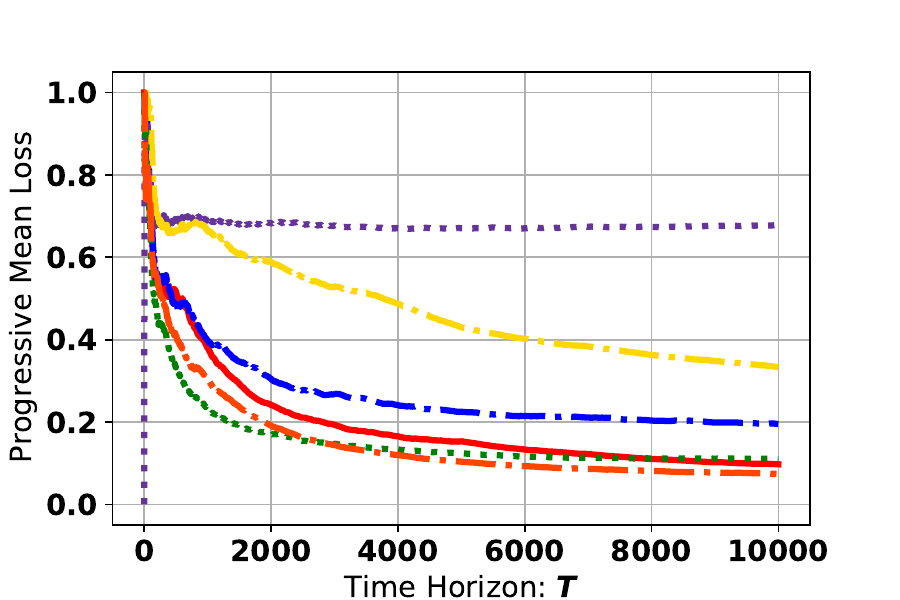}\label{fig:stream}} \hfill
	\subfloat[][]{\includegraphics[width = 0.29\linewidth]{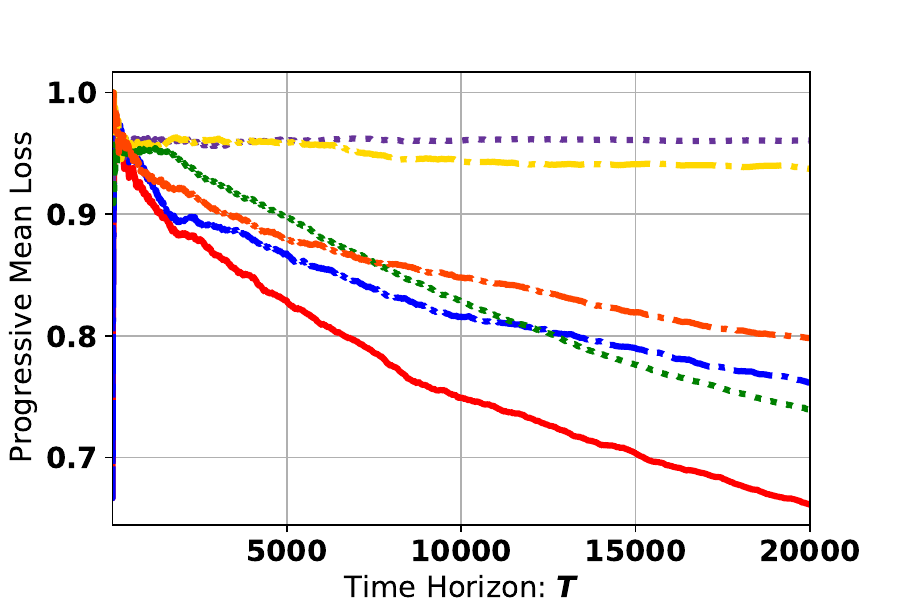}\label{fig:letter}} \hfill 
	\subfloat[][]{\includegraphics[width = 0.10\linewidth]{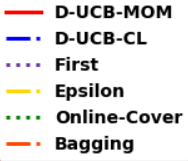}\label{fig:legend}}
	\caption{ \small In all these plots, the progressive mean loss~\cite{agarwal2014taming} till time $T$ has been plotted as a function of time $T$. $(a)$ Performance of the algorithms on the Yeast dataset~\cite{horton1996probabilistic}. $(b)$ Performance of the algorithms on the Stream Analytics dataset~\cite{stream}. $(c)$ Performance of the algorithms on the Letters dataset~\cite{frey1991letter}. $(d)$ Legend.}
	\label{fig:sims_regret}. 
\end{figure*}

In this section, we will empirically test our algorithms on three real-world multi-class classification datasets, against other state of the art algorithms for contextual bandits with experts. Any multi-class classification dataset can be converted into a contextual bandit scenario, where the features are the contexts. At each time-step, the feature (context) of a sample point is revealed, following which the contextual bandit algorithm chooses one of the $K$ classes, and the reward observed is $1$ if its the correct class otherwise it is $0$. This is bandit feedback as the correct class is never revealed, if not chosen. This method has been widely used to benchmark contextual bandit algorithms~\cite{beygelzimer2011contextual,agarwal2014taming}, and is in fact implemented in Vowpal Wabbit~\cite{wabbit}.

Our algorithm is run in batches. At the starting of each batch, we add experts trained on prior data through cost-sensitive classification oracles~\cite{beygelzimer2009offset} and also update the divergence terms between experts, which are estimated from data observed \textit{so far}. During each batch, Algorithm~\ref{alg:dUCB} is deployed with the current set of experts. The pseudo-code for this procedure is provided in Algorithm~\ref{alg:batched}. 
\begin{algorithm}
	\caption{Batched D-UCB with cost-sensitive classification experts}
	\begin{algorithmic}[1]
		\State Let $\Pi = \{\pi_1\}$, which is an expert that chooses arms randomly. For time steps $t = 1$ to $3K$, choose an arm sampled from expert $\pi_1$. $t = 3K+1$.  
		\State Add experts to $\Pi$ trained on observed data and update divergences. 
		\While {$t <= T$}
		\For {$s = t$ to $t + \mathcal{O}(\sqrt{t})$}
		\State Deploy Algorithm~\ref{alg:dUCB} with experts in $\Pi$.
		\EndFor
		\State Let $t = t + \mathcal{O}(\sqrt{t})$. Add experts to $\Pi$ trained on observed data and update divergences.
		\EndWhile
	\end{algorithmic}
	\label{alg:batched}
\end{algorithm}
We use XgBoost~\cite{chen2016xgboost} and Logistic Regression in scikit-learn~\cite{buitinck2013api} with calibration, as the base classifiers for our cost-sensitive oracles. Bootstrapping is used to generate different experts. At the starting of each batch $4$ new experts are added. The constants are set as $C=1, c_1 = 1, c_2 = 4$ and $c_3 = 2$ in practice. All the settings are held fixed over all three data-sets, {\it without any parameter tuning}. We provide more details in Appendix~\ref{sec:more_emp}. In the appendix we also show that the gap dependent term in our theoretical bounds grows much slower compared to UCB-1 bounds (Fig.~\ref{fig:terms}), as the number of experts increase in the stream analytics dataset~\cite{stream}. An implementation of our algorithm can be found \href{https://github.com/rajatsen91/CB_StochasticExperts}{here} \footnote{https://github.com/rajatsen91/CB\_StochasticExperts}.

We compare against Vopal Wabbit implementations of the following algorithms: $(i)$ $\epsilon$-greedy~\cite{langford2008epoch} - parameter set at '--epsilon 0.06'. $(ii)$ First (Greedily selects best expert) - parameter set at '--first 100'. $(iii)$ Online Cover~\cite{agarwal2014taming} - parameter set at '--cover 5' $(iv)$ Bagging (Simulates Thompson Sampling through bagged classifiers) - parameter set at '--bag 7'.  

{\bf Yeast Data: } This dataset~\cite{horton1996probabilistic} is a part of UCI repository. It has data from $1484$ instances with $8$ dimensional continuous features (contexts). There are $10$ different localization sites for protiens that can be used as classes or labels ($10$ arms). The performance of the algorithms are shown in Fig.~\ref{fig:drug}. We see that D-UCB (Algorithm~\ref{alg:batched}) with median of moments clearly performs the best in terms of average loss, followed by the bagging approach. D-UCB with median of moments converges to an average loss of $0.61$ while that of  bagging is $0.63$. 

{\bf Stream Analytics Data: } This dataset~\cite{stream} has been collected using the stream analytics client. It has $10000$ samples with $100$ dimensional mixed features (contexts). There are $10$ classes ($10$ arms). For each entry, if the bandit algorithm selects the correct class, the reward observed is $1$, o.w. $0$ reward is observed. The performance of the algorithms are shown in Fig.~\ref{fig:stream}. In this data-set bagging performs the best closely followed by D-UCB-MOM (Algorithm~\ref{alg:batched}) and Online-Cover. Bagging is a strong competitor empirically, however this algorithm lacks theoretical guarantees. Bagging converges to an average loss of $8\%$, while D-UCB with median of moments converges to an average loss of $9\%$.

{\bf Letters Data: } This dataset~\cite{frey1991letter} is a part of the UCI repository. It has $20000$ samples of hand-written English letters, each with $17$ hand-crafted visual features (contexts). There are $26$ classes ($26$ arms) corresponding to $26$ letters. For each entry, if the bandit algorithm selects the correct letter, the reward observed is $1$, o.w. $0$ reward is observed. The performance of the algorithms are shown in Fig.~\ref{fig:letter}. D-UCB-MOM significantly outperform the other algorithms.

\subsection*{Conclusion}
We study the problem of contextual bandits with stochastic experts. We propose two UCB style algorithms, that use two different importance sampling estimators, which can leverage \textit{information leakage} between the stochastic experts. We provide instance-dependent regret guarantees for our UCB based algorithms. Our algorithms show strong empirical performance on real-world datasets. We believe that this paper introduces an interesting problem setting for studying contextual bandits, and opens up opportunities for future research that may include better regret bounds for the problem and an instance-dependent lower-bound. 

\subsubsection*{Acknowledgment} 
This work is partially supported by NSF SaTC 1704778, ARO W911NF-17-1-0359, and the US DoT supported D-STOP Tier 1 University Transportation Center.

\FloatBarrier
\FloatBarrier
\bibliographystyle{plain}
\bibliography{experts.bib}

\clearpage
\appendix

\section{Clipped Estimator}
\label{sec:clipped}

As mentioned in Section~\ref{sec:estimators}, the motivating equation guiding the design of our estimators is Eq.~\eqref{eq:infoleakage}. This equation tells us that even when the statistics of the samples observed are governed by the distribution of $(X,V,Y)$ under expert $j$, we can infer the mean of expert $k$. Such observations were made in~\cite{lattimore2016causal,pmlr-v70-sen17a} in the context of best arm identification problems. Suppose we observe $t$ samples under expert $\pi_j$. Guided by Eq.~\eqref{eq:infoleakage}, one might come up with the following naive importance sampled estimator for the mean under expert $k$ ($\mu_k$):
\begin{align*}
\hat{\mu}_k = \frac{1}{t}\sum_{s = 1}^{t} Y_j\frac{\pi_k(V_j(s) \vert
	X_j(s))}{\pi_j(V_j(s) \vert X_j(s))}. 
\end{align*}
However, it is not possible to derive good confidence interval for the above estimator because even though the reward variable $Y$ is bounded, the reweighing term $\pi_k(V_j(s) \vert
	X_j(s))/\pi_j(V_j(s) \vert X_j(s))$ can be unbounded and in some case heavy-tailed. The key idea is to come up with robust estimators that have good variance properties. One approach of controlling the variance of such estimators is to clip that the samples that are too large. This leads to the following clipped estimator~\cite{pmlr-v70-sen17a}:

\begin{align*}
\label{eq:clippedtwo}
\hat{\mu}_k = &\frac{1}{t}\sum_{s = 1}^{t} Y_j(s)\frac{\pi_k(V_j(s) \vert
	X_j(s))}{\pi_j(V_j(s) \vert X_j(s))} \times \mathds{1} \left(\frac{\pi_k(V_j(s) \vert
	X_j(s))}{\pi_j(V_j(s) \vert X_j(s))} \leq \eta_{kj} \right). \numberthis
\end{align*}

The clipping makes the estimator biased, however it helps in controlling the variance. The clipper level $\eta_{kj}$ which depends on the relationship between $\pi_k$ and $\pi_j$ needs to be set carefully to control the bias-variance trade-off. In~\cite{pmlr-v70-sen17a}, it has been shown that if the log-divergence measure $M_{kj}$ (defined in~\eqref{def:mij}) is bounded, then a good choice is $\eta_{kj} = 2 \log(2/\epsilon) M_{kj}$, if we want an additive bias of at most $\epsilon (t)/2$ (Theorem 3 in~\cite{pmlr-v70-sen17a}).  

This idea can be generalized to estimating the mean of expert $k$, while observing samples from all the other experts. This leads to the clipped estimator in Eq.~\eqref{eq:est1}. In what follows, we will abbreviate $\EE_{p_j(.)}[.]$ as $\EE_j[.]$. In this section let $\hat{\mu}_k(t) = \hat{\mu}^c_k(t)$. 

Recall that the for all experts $i\in [N]$, we have that $\mu_i \geq \gamma$. Specifically, this implies that for the ``worst'' expert, we have that $\min_{i\in N}\mu_i \geq \gamma.$ The following lemma establishes concentrations for our clipped estimator in Eq.~\eqref{eq:est1}.

\begin{lemma}
	\label{lem:clipped}
For any expert $j\in [N]$, the estimator $\hat\mu_j(t)$ defined in Eq.\eqref{eq:est1} satisfies
\begin{align*}
\PP\left( (1 - \chi(t))\left(\mu_j - \frac{\epsilon(t)}{2}\right) \leq \hat{\mu}_j(t) \leq (1 + \chi(t)) \mu_j\right ) \geq 1 -  2\exp \left( - \frac{\gamma^2 \chi(t)^2t}{128 M^2 (\log (2 / \epsilon(t)))^2 }\right).
\end{align*}
when $\chi(t)$ and $\epsilon(t) < \gamma$ are fixed non-negative constants. 
\end{lemma}

\begin{proof}
Fix any expert $j\in [N]$. Let $k(l)$ denote the expert that was chosen at time-slot $l$. Define the following martingale: let $\cF_s$ denote the filtration that is formed by the observations until time $s$ \emph{and} the expert chosen at time $s+1$. Note that given that $k(l)$ is fixed, $M_{jk(l)}$ is a fixed constant which denotes the divergence with between experts $j$ and $k(l)$. With $A_0 = 0$, define

\begin{align*}
A_s = \sum_{l=1}^{s} \frac{L_j(l)}{M_{jk(l)}} -  \sum_{l=1}^{s} \EE\left[ \frac{L_j(l)}{M_{jk(l)}} \bigg \vert \mathcal{F}_{l-1} \right]. 
\end{align*}

where to ease notation, we define 
\begin{align*}
r_l &:= \frac{\pi_j(V_{k(l)}(l) \vert X_{k(l)}(l))}{\pi_{k(l)}(V_{k(l)}(l) \vert X_{k(l)}(l))}\\
\alpha_l &:= 2\log(2/\epsilon(t))M_{jk(l)},\\
L_j(l) &:=  Y_l \times r_l \times\mathds{1}\left\{ r_l \leq \alpha_l \right\} .
\end{align*}

As $\cF_{l-1}$ contains the expert chosen at time $l$, we can write with $\mu_j(l) := \EE[L_j(l) | F_{l-1}]$ and $B_t = \sum_{l = 1}^t \frac{L_j(l)}{M_{jk(l)}}$,
\begin{align*}
A_s = B_s -  \sum_{l=1}^{s} \frac{\mu_j(l)}{M_{jk(l)}},
\end{align*}
Note that $|A_s - A_{s-1}| \leq 4\log(2/\epsilon(t))$. Therefore we can apply the Azuma-Hoeffding inequality for martingales to write 
\begin{align}
\PP \left( \bigg \lvert B_t -  \sum_{l=1}^{t} \frac{\mu_j(l)}{M_{jk(l)}} \bigg \rvert \geq \chi\right) \leq 2\exp \left( - \frac{\chi^2}{32 t (\log (2 / \epsilon(t)))^2 }\right). \label{eq:martingale}
\end{align}

Now we consider each of the tails separately. 

\noindent\textbf{1.} We have that 
\begin{align*}
\PP \left(  B_t \geq \sum_{l=1}^{t} \frac{\mu_j(l)}{M_{jk(l)}} + \chi \right) \leq \exp \left( - \frac{\chi^2}{32t (\log (2 / \epsilon(t)))^2 }\right)
\end{align*}
Consider the following chain:
\begin{align*}
(1+\chi(t))\left( \max_{l\in[t]} \mu_j(l)\right) \sum_{l = 1}^t \frac{1}{M_{jk(l)}} &\geq \sum_{l = 1}^t \frac{\mu_j(l)}{M_{jk(l)}}  + \frac{\chi(t) t}{M}\max_{l\in [t]} \mu_j(l)\\
&\geq \sum_{l = 1}^t \frac{\mu_j(l)}{M_{jk(l)}}  + \frac{\gamma\chi(t) t}{2M}
\end{align*}

where the final inequality comes about as a consequence of $\epsilon(t)<\gamma$ and $\mu_j(l) \geq \mu_j -\frac{\epsilon(t)}{2}\geq \gamma - \frac{\epsilon(t)}{2}$. The latter is proved in Lemma \ref{lem:mean}. We also use the fact that $M := \max_{j,k} M_{jk}$. Thus, we have 
\begin{align*}
&  \PP \left(  B_t \geq (1 + \chi(t) ) \left(\max_l {\mu_j(l)} \right)\sum_{l=1}^{t} \frac{1}{M_{jk(l)}}  \right) \stackrel{(i)}{\leq} \PP \left(  B_t \geq \sum_{l=1}^{t} \frac{\mu_j(l)}{M_{jk(l)}} + \frac{\gamma\chi(t) t}{2M} \right) \\
& \leq \exp \left( - \frac{\gamma^2\chi(t)^2t}{128 M^2 (\log (2 / \epsilon(t)))^2 }\right). 
\end{align*}
This implies that,
\begin{align}
\PP \left( \hat{\mu}_k(t) \geq (1 + \chi(t) ) (\max_l {\mu_j(l)} ) \right) \leq \exp \left( - \frac{\gamma^2\chi(t)^2t}{128 M^2 (\log (2 / \epsilon(t)))^2 }\right). \label{eq:upper_tail}
\end{align}
	
\noindent\textbf{2.} We have that 
\begin{align*}
\PP \left(  B_t \leq \sum_{l=1}^{t} \frac{\mu_j(l)}{M_{jk(l)}} - \chi \right) \leq \exp \left( - \frac{\chi^2}{32t (\log (2 / \epsilon(t)))^2 }\right)
\end{align*}

Using similar arguments as above, we can write

\begin{align*}
&  \PP \left(  B_t \leq (1 - \chi(t) ) (\min_l {\mu_j(l)} )\sum_{l=1}^{t} \frac{1}{M_{jk(l)}}  \right) \leq \PP \left(  B_t \leq \sum_{l=1}^{t} \frac{\mu_j(l)}{M_{jk(l)}} - \frac{\gamma\chi(t) t}{2M} \right) \\
& \leq \exp \left( - \frac{\gamma^2\chi(t)^2t}{128 M^2 (\log (2 / \epsilon(t)))^2 }\right). 
\end{align*}

This implies that,
\begin{align}
\PP \left( \hat{\mu}_k(t) \leq (1 - \chi(t) ) (\min_l {\mu_j(l)} ) \right) \leq \exp \left( - \frac{\gamma^2\chi(t)^2t}{128 M^2 (\log (2 / \epsilon(t)))^2 }\right). \label{eq:lower_tail}
\end{align}
Since the choice of $j\in[N]$ was arbitrary, Combining eq.~\ref{eq:upper_tail}, ~\ref{eq:lower_tail} and \ref{eq:prev_paper} we have the lemma. 
\end{proof}

Now, we prove the bias results for $\mu_j(l)$ we use above.
\begin{lemma}
	\label{lem:mean}
	For all times $l$ and all experts $j\in [N]$,  we have,
	\begin{align}
	\mu_j - \frac{\epsilon(t)}{2} \leq \mu_j(l) \leq \mu_j \label{eq:prev_paper}.
	\end{align}
	where $\mu_j(l)$ is defined in the proof of Lemma~\ref{lem:clipped}.
\end{lemma}

\begin{proof}
We first note that under the filtration $\cF_{l-1}$, $M_{jk(l)}$ is a constant and $k(l)$ is fixed. Therefore, following the notation in Lemma \ref{lem:clipped},  we have the following chain,
\begin{align*}
& \mu_j(l) = \EE \left[L_j(l) \vert \cF_{l-1} \right] \\
&= \EE_{k(l)} \left[ Y_l\frac{\pi_j(V_{k(l)}(l) \vert X_{k(l)}(l))}{\pi_{k(l)}(V_{k(l)}(l) \vert X_{k(l)}(l))}\right] - \EE_{k(l)} \left[Y_l \frac{\pi_j(V_{k(l)}(l) \vert X_{k(l)}(l))}{\pi_{k(l)}(V_{k(l)}(l) \vert X_{k(l)}(l))}\times\mathds{1}\left\{ r_l > \alpha_l\right\} \right] \\
&\stackrel{(i)}{\geq} \mu_j - \PP_{j}\left( \frac{\pi_j(V_{k(l)}(l) \vert X_{k(l)}(l))}{\pi_{k(l)}(V_{k(l)}(l) \vert X_{k(l)}(l))}  > 2\log(2/\epsilon (t))M_{jk(l)} \right) \\
& \stackrel{(ii)}{\geq} \mu_j - \frac{\epsilon(t)}{2}. 
\end{align*}

Here, (i) follows from the fact that $Y \in [0,1]$ and (ii) follows from Lemma 2 in~\cite{pmlr-v70-sen17a}. 
\end{proof}

Now, we proceed by bounding the probabilities of bad events for the best expert and suboptimal experts separately. We recall that the index of an expert $k\in [N]$ is set to be $U_k(t) = \hat\mu_k(t) + \frac{3}{2}\beta(t)$ where $\beta(t) = Cw\left(\frac{\sqrt{c_1 t \log t}}{Z_k(t)} \right)$. Here, C = $\frac{16M}{\gamma}, c_1 = 1, w(x) = y \iff \frac{y}{\log(2/y)} = x$. 

For the best arm, we establish the following confidence result.

\begin{lemma}
	\label{lem:ubound} 
	Define $T_1 = \min\left\{ t: \beta'(t) := Cw\left(\frac{\sqrt{c_1t\log t}}{t} \right)\leq \gamma\right\}$. Then, for all $t\geq T_1$, the index of the best arm formed using the Clipped Estimator satisfies
	\begin{align*}
	\PP \left( U_{k^*}(t) > \mu^* \right) \geq 1 - \frac{1}{t^2},
	\end{align*}
\end{lemma}

\begin{proof}
	Since $Z_k(t)\leq t$ and $w(x)$ is increasing, we have that $\beta(t)\geq \beta'(t)$. Now, we have the following chain,
	\begin{align*}
	\PP \left( U_{k^*}(t) \leq \mu^* \right) &= \PP \left( \hat{\mu}_{k^*}(t) \leq \mu^* - \frac{3}{2}C w \left(\frac{\sqrt{c_1 t \log t}}{Z_k(t)}\right) \right) \\
	& \stackrel{(i)}{\leq} \PP \left( \hat{\mu}_{k^*}(t) \leq \mu^* -\mu^*C w \left(\frac{\sqrt{c_1 t \log t}}{t}\right) - \frac{1}{2}C w \left(\frac{ \sqrt{c_1 t \log t}}{t}\right) \right) \\
	&\stackrel{(ii)}{\leq} \PP \left( \hat{\mu}_{k^*}(t) \leq \mu^* -\mu^* \beta'(t) - (1 - \beta'(t)) \frac{1}{2} \beta'(t) \right) \\
	&\leq \PP\left(\hat\mu_{k^*}(t) \leq (1-\beta'(t))\left(\mu^* - \frac{\beta'(t)}{2}\right)  \right)
	\end{align*}
	
Here, (i) uses the observation above and that $\mu^* \leq 1$. (ii) follows from the fact that $\beta'(t) \leq 1$. Now, we use Lemma \ref{lem:clipped} with $\chi(t)$ set to be sample path independent $\beta'(t)$ to write 
\begin{align*}
\PP \left( U_{k^*}(t) \leq \mu^* \right) &\leq \exp\left( -\frac{\gamma^2 \beta'(t)^2 t}{128M^2 (\log(2/\beta'(t)))^2} \right)
\end{align*}

Consider the exponent
\begin{align*}
\frac{\gamma^2t}{128M^2} \left( \frac{\beta'(t)}{\log(2/\beta'(t))}\right)^2 &= \frac{\gamma^2C^2t}{128M^2}\left( \frac{w\left(\sqrt{\nicefrac{\log t}{t}}\right)}{\log\left(\nicefrac{2}{Cw\left(\sqrt{\nicefrac{\log t}{t}}\right)}\right)} \right)^2\\
&= 2t \left( \frac{w\left(\sqrt{\nicefrac{\log t}{t}}\right)}{\log\left(\nicefrac{2}{Cw\left(\sqrt{\nicefrac{\log t}{t}}\right)}\right)} \right)^2\\
&\geq 2t\left(\sqrt{\frac{\log t}{t}}\right)^2
\end{align*}
Here, the inequality holds due to the following chain of reasoning:
\begin{align*}
\frac{w(x)}{\log\left(\nicefrac{2}{w(x)}\right)} &= x\\
\frac{w(x)}{\log\left(\nicefrac{2}{aw(x)}  \right)} &= x \left(  \frac{\log (\nicefrac{2}{w(x)})}{\log( \nicefrac{2}{aw(x)} )} \right)
\end{align*}
The RHS here is greater than  or equal to $x$ if $a\geq 1$, which is the case with $a = C$ since $M\geq 1, \gamma \leq 1$. Using this lower bound for the exponent in the original expression gives us the required result.

\end{proof}

Next we prove that for a large enough time $t$, the UCB estimate of the $k^{th}$ expert is less than that of $\mu^*$.

\begin{lemma}
	\label{lem:lbound}
	
	Let $T_1$ be as defined in Lemma~\ref{lem:ubound}. Then, for all $t \geq T_k := \max\left\{T_1,\frac{9C^2 M^2 \log T \log^2\left( \nicefrac{6C}{\Delta_k}  \right)}{\Delta_k^2} \right\}$, we have that for any sub-optimal arm $k\neq k^*$,
	\begin{align*}
	\PP \left( U_{k}(t) < \mu^* \right) \geq 1 - \frac{1}{t^2}.
	\end{align*}
\end{lemma}

\begin{proof}
	We have that $Z_k(t)\geq \frac{t}{M}$ and $t\geq\frac{9C^2 M^2 \log T \log^2\left( \nicefrac{6C}{\Delta_k}  \right)}{\Delta_k^2}.$ Additionally, $w(x)$ is increasing in $x$. Therefore, we can write that
	\begin{align*}
	\frac{3C}{2}w \left( \frac{\sqrt{c_1 t \log t}}{Z_k(t)}\right) &\leq \frac{3C}{2}w \left( \frac{M\sqrt{c_1 t \log t}}{t}\right)\\
	&\leq \frac{3C}{2}w \left( \frac{M\sqrt{c_1 t \log T}}{t}\right) \nonumber \\
	&\leq \frac{3C}{2}w \left( \frac{M\sqrt{c_1  \log T}}{\frac{3CM\sqrt{c_1\log T} \log \left(\nicefrac{6C}{\Delta_k} \right)}{\Delta_k}}\right)\\
	&= \frac{3C}{2} w\left( \frac{\nicefrac{\Delta_k}{3C} }{\log\left(  \frac{2}{\nicefrac{\Delta_k}{3C}}  \right)} \right)\\
	&= \frac{\Delta_k}{2} 
	\end{align*}
	Where the last equality follows from the fact that $w(x) = y \iff \frac{y}{\log\left(\nicefrac{2}{y}\right)} = x$. Using the argument above along with the fact that $\Delta_k = \mu^* - \mu_k$ and $\mu_k<\mu^*\leq 1$, we have the following chain:
	
	\begin{align*}\label{eq:lboundeq}
	\PP \left( U_{k}(t) > \mu^* \right) &= \PP \left(\hat{\mu}_k(t) > \mu^* - \frac{3 \beta(t)}{2} \right) \\
	&\leq \PP \left(\hat{\mu}_k(t) > \mu^* - \frac{\Delta_k}{2}\right) \\
	&\leq  \PP\left(\hat\mu_k(t) > \mu_k + \frac{\Delta_k}{2} \right)\\
	&\leq \PP \left(\hat{\mu}_k(t) > \mu_k\left( 1 + \frac{\Delta_k}{2}\right) \right) \\
	& \stackrel{(i)}{\leq} \PP \left(\hat{\mu}_k(t) > \mu_k\left( 1 + \frac{3C}{2}w \left( \frac{M \sqrt{t \log t}}{t}\right) \right) \right) \\
	&\leq \exp\left(-\frac{\gamma^2 \left(\nicefrac{3C}{2}\right)^2 t }{128M^2} \times \left( \frac{w\left(M\sqrt{\nicefrac{\log t}{t}}\right)}{\log \left(\nicefrac{2}{Cw\left(\sqrt{\nicefrac{\log t}{t}} \right) } \right)} \right)^2\right)\numberthis
	\end{align*}
	
In (i) we have used the fact that $\Delta_k/2 \geq \frac{3C}{2}w \left( \frac{M \sqrt{t \log t}}{t}\right)$ from the chain just before (note that $c_1=1$). Here, the final inequality applies Lemma~\ref{lem:clipped} (bounds for the upper tail error) with $\chi(t) = \frac{3C}{2}w \left( \frac{M\sqrt{ t \log t}}{t}\right)$ and $\epsilon(t) = \beta'(t)$ defined in Lemma~\ref{lem:ubound}.  

Considering the exponent alone, we have that
\begin{align*}
 \frac{\gamma^2 \left(\nicefrac{3C}{2}\right)^2 t }{128M^2} \times \left( \frac{w\left(M\sqrt{\nicefrac{\log t}{t}}\right)}{\log \left(\nicefrac{2}{Cw\left(\sqrt{\nicefrac{\log t}{t}} \right) } \right)} \right)^2 &= \frac{9}{2}t\times \left( \frac{  w\left(M\sqrt{\nicefrac{\log t}{t}} \right)}{\log\left(\nicefrac{2}{Cw\left(\sqrt{\nicefrac{\log t}{t} }\right)}\right) }\right)^2\\
 &\geq 2t \times \left( \frac{  w\left(\sqrt{\nicefrac{\log t}{t}} \right)}{\log\left(\nicefrac{2}{Cw\left(\sqrt{\nicefrac{\log t}{t} }\right)}\right) }\right)^2\\
 &\geq 2t\left(\sqrt{\frac{\log t}{t}} \right)^2
\end{align*}
Here, the first inequality uses that $w(x)$ is increasing and $M\geq 1$, while the final inequality follows from the reasoning in the proof of Lemma~\ref{lem:ubound}. Substituting this inequality in Equation~\eqref{eq:lboundeq} gives us the result.

\end{proof}

\begin{proof}[Proof of Theorem~\ref{thm:r1}]

	Firstly, we note that together, Lemmas~\ref{lem:ubound} and \ref{lem:lbound} imply that for $k\neq k^*$, for all $t\geq T_k$, it holds that 
	
	\begin{align}
	\label{eq:badevent}
	\PP \left(k(t) = k \right) \leq \frac{2}{t^2}
	\end{align}
	 for $T_k$ defined as in Lemma~\ref{lem:lbound}. Further, recall that the experts are indexed as $\{ (1),(2),...,(N)\}$ with $0 = \Delta_{(1)} \leq \Delta_{(2)}\leq ... \leq \Delta_{(N)}$ and thus, we have that the sequence of times $T_{(N)},T_{(N-1)},...,T_{(2)}$ is increasing.  
	 
	 Thus, we have the following decomposition for the regret of the algorithm:
	 
	 \begin{align*}
		R_T &= \sum_{t=1}^T \mathbb{E}[\Delta_{k(t)}]\\
		&= \sum_{t=1}^T \sum_{k = 2}^N \Delta_k \PP(k(t) = k)\\
		&\leq \Delta_{(N)}\left[T_{(N)} + \frac{\pi^2}{3}\right] +\Delta_{(N-1)} \left[ \left(T_{(N-1)}-T_{(N)}\right) + \frac{\pi^2}{3}\right] +...\\
		&~~~~~~~~~~~~~+ \Delta_{(N-k-1)} \left[ \left(T_{(N-k-1)}-T_{(N-k)} \right)+ \frac{\pi^2}{3}\right] +...+ \Delta_{(2)}\left[ \left(T_{(2)}-T_{(3)} \right) + \frac{\pi^2}{3}\right]\\
		&= \Delta_{(N)}T_{(N)} + \frac{\pi^2}{3}\left(\sum_{k=2}^N \Delta_{(k)}\right)+ \sum_{k=0}^{N-3} \left( T_{(N-k-1)} - T_{(N-k)} \right)\Delta_{(N-k-1)}.
	 \end{align*}
	
	Thus, for $T$ large enough (specifically, $T$ such that $T_{(N)}>T_1$), we can write
	
	\begin{align*}
		R_T &= \frac{9C^2M^2\log T \log^2(\nicefrac{6C}{\Delta_{(N)}})}{\Delta_{(N)}} + \frac{\pi^2}{3}\sum_{k=2}^N \Delta_{(k)} \\
		&~~~~~~~+ \sum_{k=0}^{N-3} \frac{9C^2M^2\log T \log^2(\nicefrac{6C}{\Delta_{(N-k-1)}})}{\Delta_{(N-k-1)}}\left(1-\frac{\log^2 (\nicefrac{6C}{\Delta_{(N-k)}})\Delta_{(N-k-1)}^2}{\log^2(\nicefrac{6C}{\Delta_{(N-k-1)}}) \Delta_{(N-k)}^2} \right)\\
		&= \frac{2304M^4\log T \log^2(\nicefrac{96M}{\gamma\Delta_{(N)}})}{\gamma^2\Delta_{(N)}} + \frac{\pi^2}{3}\sum_{k=2}^N \Delta_{(k)} \\
		&~~~~~~~~ +\sum_{k=2}^{N-1} \frac{2304M^4\log^2(\nicefrac{96M}{\gamma\Delta_{(k)})} \log T}{\gamma^2\Delta_{(k)}}\left(1-\frac{\alpha(\Delta_{(k)})}{\alpha(\Delta_{(k+1)})} \right) \numberthis\label{eq:uselater}
	\end{align*}
	where $\alpha(x) := \frac{x^2}{\log(\nicefrac{96M}{\gamma x})}$.

\end{proof}

\section{Median of Means Estimator}
\label{sec:mom}

We will prove the following lemma,

\begin{lemma}
	\label{lem:mom}
	Let $\delta(t) \in (0,1)$. Then the estimator in~\eqref{eq:est2} has the following confidence bound,
	\begin{align}
	\PP \left( \vert \hat{\mu}^m_{k}(t) - \mu_k \vert \leq \sigma \sqrt{\frac{c'_3 \log (1/\delta(t))}{t}} \right) \geq 1 - \delta(t). 
	\end{align}
\end{lemma}
where we set $c'_3 = 32$ for the analysis. 

The median of means estimator is popular for estimating statistics under heavy-tailed distribution~\cite{bubeck2013bandits,lugosi2017sub}. We shall see that the median of means based estimator in Eq.~\eqref{eq:est2} has good variance properties, when the chi-square divergence (Assumption~\ref{assump2}) are bounded. Before proving Lemma~\ref{lem:mom}, we will be establishing some intermediate results. 

\begin{lemma}
	\label{lem:chebby}
	Consider the quantity $\hat{\mu}^{r}_j(t)$ in Eq.~\eqref{eq:momm}. The variance of this quantity is upper bounded as follows:
	\begin{align*}
	\EE \left[ \left(\hat{\mu}^{r}_j(t) - \mu_j \right)^2\right]  \leq  \frac{\sigma^2}{m}
	\end{align*}
	where $m = \floor{t/l(t)}$. 
\end{lemma}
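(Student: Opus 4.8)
The plan is to compute $\mathrm{Var}[\hat\mu_k^{(r)}(t)]$ directly from the definition in~\eqref{eq:momm} as a normalized sum of independent importance-weighted reward terms, bound the per-sample variance using the chi-square divergence, and then reorganize the result using the definitions of $W_k(r,t)$, $n(r,t)$ and $W_k(t)$. The only genuinely analytic step is the single-term second-moment bound; everything else is definitional bookkeeping.

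First I would isolate a single sample. For a time-step $s\in\mathcal{T}_j^{(r)}$ at which expert $j$ was played, write $Z_j(s)=Y_j(s)\,\pi_k(V_j(s)\mid X_j(s))/\pi_j(V_j(s)\mid X_j(s))$, so that the summand appearing in~\eqref{eq:momm} is $Z_j(s)/\sigma_{kj}$. By the information-leakage identity~\eqref{eq:infoleakage} each group-mean is unbiased for $\mu_k$. The key computation is the second moment: since $Y\in[0,1]$,
\[
\EE_j[Z_j(s)^2]\;\leq\;\EE_{X}\Big[\sum_v \pi_j(v\mid X)\,\tfrac{\pi_k^2(v\mid X)}{\pi_j^2(v\mid X)}\Big]\;=\;1+D_{f_2}(\pi_k\Vert\pi_j)\;=\;\sigma_{kj}^2,
\]
using $f_2(u)=u^2-1$ and Definition~\ref{def:sij}. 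Hence $\mathrm{Var}[Z_j(s)/\sigma_{kj}]\leq \EE_j[Z_j(s)^2]/\sigma_{kj}^2\leq 1$, i.e. every scaled summand has variance at most one.

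Next, treating the samples across distinct time-steps as independent draws of $(X,V,Y)$ from the distribution induced by the played expert, the variance of a group-mean is the normalized sum of per-sample variances:
\[
\mathrm{Var}[\hat\mu_k^{(r)}(t)]\;=\;\frac{1}{W_k(r,t)^2}\sum_{j}\sum_{s\in\mathcal{T}_j^{(r)}}\mathrm{Var}\!\big[Z_j(s)/\sigma_{kj}\big]\;\leq\;\frac{n(r,t)}{W_k(r,t)^2}.
\]
I would then invoke $W_k(t)=\min_r W_k(r,t)/n(r,t)$, which gives $W_k(r,t)\geq W_k(t)\,n(r,t)$, so that $\mathrm{Var}[\hat\mu_k^{(r)}(t)]\leq 1/(W_k(t)^2\,n(r,t))$; since each group contains at least $m=\floor{t/l(t)}$ samples, $n(r,t)\geq m$ and the first claimed inequality follows. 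For the second inequality, Assumption~\ref{assump2} gives $\sigma_{ki}\leq\sigma$ for all $i$, whence $W_k(r,t)=\sum_i n_i(r,t)/\sigma_{ki}\geq n(r,t)/\sigma$, so $W_k(t)\geq 1/\sigma$ and therefore $1/W_k(t)^2\leq\sigma^2$.

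I expect the main obstacle to be the independence argument used to split the variance of the sum, since the counts $n_i(r,t)$ and the group allocation are themselves random and history-dependent, and the context-conditional mean of a single term is $\EE_{V\sim\pi_k}[r_V\mid X]$ rather than $\mu_k$. I would handle this by carrying out the variance bound conditionally on the realized expert-selection sequence and grouping: conditioned on these, each within-group term is a fresh draw from its expert-induced distribution with per-sample variance at most one (the bound above holds for the full marginal under $p_j$), so the per-realization variance bound is exactly $n(r,t)/W_k(r,t)^2$, and since the resulting estimate $\tfrac1m\tfrac{1}{W_k(t)^2}$ holds for every realization it holds unconditionally. The remaining steps are purely the definitions of $W_k(r,t)$, $n(r,t)$, $W_k(t)$ and $m$.
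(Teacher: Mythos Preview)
Your proposal is correct and follows essentially the same route as the paper: expand the variance of the group mean as a sum of per-sample variances, bound each term by $1$ via $\EE_j[(Y\,\pi_k/\pi_j)^2]\le \sigma_{kj}^2$, and then rewrite using $W_k(r,t)/n(r,t)\ge W_k(t)\ge 1/\sigma$. If anything, your version is slightly more careful than the paper's: you bound the second moment of $Y\cdot(\pi_k/\pi_j)$ directly rather than writing the intermediate inequality $\mathrm{Var}(Y\cdot\mathrm{ratio})\le\mathrm{Var}(\mathrm{ratio})$ (which is not literally true, though the intended bound via the second moment is), you treat the case $n(r,t)\neq m$ explicitly via $n(r,t)\ge m$, and you make the conditioning argument for independence explicit.
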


\begin{proof}
	Let us consider the group of samples in the $r$-th group and re-index them from $i=1$ to $m$ in the order in which they were collected. Therefore, we can write the estimator as follows,
	\begin{align*}
	\hat{\mu}^{r}_j(t)=  \frac{1}{W_j(r, t)}\sum_{l=1}^{m} Y_l\frac{1}{\sigma_{jk(l)}} \frac{\pi_j(V_{k(l)}(l) \vert X_{k(l)}(l))}{\pi_{k(l)}(V_{k(l)}(l) \vert X_{k(l)}(l))}.
	\end{align*}

Recall that $W_j(r,t) = \sum_{l=1}^{m} \frac{1}{\sigma_{jk(l)}}$. 

We have the following chain,

\begin{align*}
\EE \left[ \left(\hat{\mu}^{r}_j(t) - \mu_j \right)^2\right] = \EE \left[ \left( \frac{1}{W_j(r, t)}\sum_{l=1}^{m} \left( Y_l\frac{1}{\sigma_{jk(l)}} \frac{\pi_j(V_{k(l)}(l) \vert X_{k(l)}(l))}{\pi_{k(l)}(V_{k(l)}(l) \vert X_{k(l)}(l))} - \frac{\mu_j}{\sigma_{jk(l)}} \right)\right)^2\right]
\end{align*}
 
 Let $A_l = \left( Y_l\frac{1}{\sigma_{jk(l)}} \frac{\pi_j(V_{k(l)}(l) \vert X_{k(l)}(l))}{\pi_{k(l)}(V_{k(l)}(l) \vert X_{k(l)}(l))} - \frac{\mu_j}{\sigma_{jk(l)}} \right)$. Now we have,
 
\begin{align*}
\EE \left[ \left(\hat{\mu}^{r}_j(t) - \mu_j \right)^2\right] &= \EE \left[ \left( \frac{1}{W_j(r, t)}\sum_{l=1}^{m} A_l \right)^2 \right] \\
&= \EE\left[ \frac{1}{W_k^2(r,t)} \left( A_m^2 + 2A_m (\sum_{l=1}^{m-1} A_l) + \sum_{l=1}^{m-1} A_l^2 + \sum_{i \neq m,  j\neq m} A_iA_j\right)\right] \\
&\leq \EE \left[ \EE\left[ \frac{1}{W_k^2(r,t)} \left( A_m^2 + 2A_m (\sum_{l=1}^{m-1} A_l) + \sum_{l=1}^{m-1} A_l^2 + \sum_{i \neq m,  j\neq m} A_iA_j\right) \bigg \vert F_{m-1}\right] \right] \\
& \leq \frac{\sigma^2}{m^2} + 0 + \EE \left[ \left( \frac{1}{W_j(r, t)}\sum_{l=1}^{m-1} A_l \right)^2 \right].
\end{align*}
  
We can apply the filtrations $F_{m-l}$ for $l = 2,..., m$ successively to arrive at the result. 
\end{proof}

Now, we can apply Markov on $\left(\hat{\mu}^{r}_j(t) - \mu_j \right)^2$ to conclude that for all $r \in [l(t)]$,

\begin{align}
\label{eq:mmeans}
\PP \left( \vert\hat{\mu}^{r}_k(t) - \mu_k \vert \leq \sigma \sqrt{\frac{4}{m}}\right) \geq \frac{3}{4}. 
\end{align}

Now we will prove Lemma~\ref{lem:mom}. 

\begin{proof}[Proof of Lemma~\ref{lem:mom}]
In light of Eq.~\eqref{eq:mmeans}, the probability that the median is not within distance $ \sigma \sqrt{\frac{4}{m}}$ of $\mu_k$ is bounded as,
\begin{align*}
&\PP\left(\vert\hat{\mu}^m_k(t) - \mu_k \vert >  \sigma \sqrt{\frac{4}{m}}\right) \\
&\leq \PP \left(\mathrm{Bin}(l(t),1/4) > l(t)/2 \right) \leq \delta(t).
\end{align*}
This concludes the proof. 
\end{proof}

Note that we will re-index the experts such that $0 = \Delta_{(1)} \leq \Delta_{(2)} \leq ... \leq \Delta_{(N)}$. Note that throughout this proof $U_k(t),\hat{\mu}_k(t)$ and $s_k(t)$ in Algorithm~\ref{alg:dUCB} are defined as in Equations~\eqref{eq:est2} and~\eqref{eq:ucb2} respectively. Before we proceed let us prove some key lemmas. Now we prove lemmas analogous to Lemmas~\ref{lem:lbound} and \ref{lem:ubound}. 

\begin{lemma}
	\label{lem:ubound_mom} We have the following confidence bound at time $t$,
	\begin{align*}
	\PP \left( U_{k^*}(t) > \mu^* \right) \geq 1 - \frac{1}{t^2}.
	\end{align*}
\end{lemma}

\begin{proof}
		\begin{align*}
	\PP \left( U_{k^*}(t)  \leq \mu^* \right) &= \PP \left(\hat{\mu}_{k^*}(t) \leq \mu^* - \frac{1}{W_k(t)} \sqrt{\frac{64 \sigma^2 \log t}{t}} \right) \\
	& \stackrel{(i)}{\leq} \PP \left(\hat{\mu}_{k^*}(t) \leq  \mu^* - \sigma \sqrt{\frac{64 \log t}{t}}\right) \\
	& \stackrel{(ii)}{\leq} \frac{1}{t^2}.
	\end{align*}
\end{proof}

Here $(i)$ follows from $W_k(t) \geq 1$ and $(ii)$ from Lemma~\ref{lem:mom}.
\begin{lemma}
	\label{lem:lbound_mom} We have the following confidence bound at time $t > \frac{256\sigma^4\log T}{\Delta_k^2}$,
	\begin{align*}
	\PP \left( U_{k}(t) < \mu^* \right) \geq 1 - \frac{1}{t^2}.
	\end{align*}
\end{lemma}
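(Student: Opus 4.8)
The plan is to mirror the structure of the proof of Lemma~\ref{lem:lbound} in the clipped-estimator section, since Lemma~\ref{lem:lbound_mom} is its exact analogue for the median-of-means estimator. I would start from the chain
\begin{align*}
\PP\left(U_k(t) > \mu^*\right) = \PP\left(\hat{\mu}^m_k(t) + s^m_k(t) > \mu^*\right),
\end{align*}
and bound the confidence width $s^m_k(t)$ from above. Recall $s^m_k(t) = \frac{1}{W_k(t)}\sqrt{c_3\log(1/\delta(t))/t}$ with $c_3 = 64$ and $\delta(t) = 1/t^2$, so $\log(1/\delta(t)) = 2\log t$. The key structural fact, analogous to $Z_k(t)/t \geq 1/M$ in the clipped case, is that $W_k(t) \geq 1/\sigma$: by its definition $W_k(t) = \min_r W_k(r,t)/n(r,t) = \min_r \sum_j n_j(r,t)/(n(r,t)\sigma_{kj})$, which is a convex combination of the values $1/\sigma_{kj} \geq 1/\sigma$, hence bounded below by $1/\sigma$. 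This gives $s^m_k(t) \leq \sigma\sqrt{128\log t/t}$.

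Next I would verify that for $t > 256\sigma^2\log T/\Delta_k^2$ the confidence width is smaller than $\Delta_k/2$. Plugging in the bound $s^m_k(t) \leq \sigma\sqrt{128\log t/t}$ and using $\log t \leq \log T$ for $t \leq T$, the threshold on $t$ forces $\sigma\sqrt{128\log t/t} < \Delta_k/2$; one checks that $256\sigma^2\log T/\Delta_k^2$ is (up to the constant, which matches $4\cdot 64 = 256$) exactly the crossover point. This lets me replace $s^m_k(t)$ with $\Delta_k/2$ and continue the chain:
\begin{align*}
\PP\left(\hat{\mu}^m_k(t) + s^m_k(t) > \mu^*\right) \stackrel{(i)}{\leq} \PP\left(\hat{\mu}^m_k(t) > \mu^* - \tfrac{\Delta_k}{2}\right) \stackrel{(ii)}{=} \PP\left(\hat{\mu}^m_k(t) > \mu_k + \tfrac{\Delta_k}{2}\right),
\end{align*}
where $(i)$ uses the width bound just established and $(ii)$ is the definition $\mu_k = \mu^* - \Delta_k$.

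Finally I would close the argument by invoking the concentration guarantee of Lemma~\ref{lem:mom} directly. With $\delta(t) = 1/t^2$, Lemma~\ref{lem:mom} states $|\hat{\mu}^m_k(t) - \mu_k| \leq s^m_k(t)$ with probability at least $1 - 1/t^2$; since $s^m_k(t) \leq \Delta_k/2$ in the regime $t > 256\sigma^2\log T/\Delta_k^2$, the deviation event $\hat{\mu}^m_k(t) > \mu_k + \Delta_k/2$ has probability at most $1/t^2$, yielding $\PP(U_k(t) > \mu^*) \leq 1/t^2$ and hence the claimed bound. The main obstacle, though a routine one, is getting the constant bookkeeping right: one must confirm that $W_k(t) \geq 1/\sigma$ really does hold for the particular min-over-groups definition of $W_k(t)$, and that the constant $256$ emerges cleanly from combining $c_3 = 64$, the factor $2$ in $\log(1/\delta(t)) = 2\log t$, and the factor-of-$4$ slack from requiring $s^m_k(t) < \Delta_k/2$ rather than $s^m_k(t) < \Delta_k$. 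Everything else is a direct transcription of the clipped-estimator proof with $M$ replaced by $\sigma$ and $\beta(t)$ replaced by the explicit median-of-means width.
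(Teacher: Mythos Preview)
Your proposal is correct and follows essentially the same approach as the paper: expand $U_k(t)$, use $1/W_k(t)\leq\sigma$ together with the time threshold to bound the confidence width by $\Delta_k/2$, rewrite $\mu^*-\Delta_k/2=\mu_k+\Delta_k/2$, and invoke Lemma~\ref{lem:mom}. Your convex-combination justification of $W_k(t)\geq 1/\sigma$ is in fact more explicit than the paper's, and the constant bookkeeping you flag is indeed a bit loose (the paper's own displayed chain writes $\sqrt{64\log t/t}$ rather than $\sqrt{128\log t/t}$), but this does not affect the argument.
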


\begin{proof}
	We have the following chain,
	
	\begin{align*}
	\PP \left( U_{k}(t) \geq \mu^* \right) &= \PP \left(\hat{\mu}_k(t) \geq \mu^* - \frac{1}{W_k(t)} \sqrt{\frac{64 \sigma^2 \log t}{t}} \right) \\
	& \stackrel{(i)}{\leq} \PP \left(\hat{\mu}_k(t) > \mu^* - \frac{\Delta_k}{2}\right) \\
	& \stackrel{(ii)}{\leq} \PP \left(\hat{\mu}_k(t) > \mu_k + \frac{\Delta_k}{2} \right) \\
	& \stackrel{(iii)}{\leq} \frac{1}{t^2}.
	\end{align*}
	
	Here, $(i)$ follows from the fact that $\frac{1}{W_k(t)} \leq \sigma$ and $t > \frac{256\sigma^4\log T}{\Delta_k^2}$. $(ii)$ is by definition of $\Delta_k$. Finally the concentration bound in $(iii)$ follows from Lemma~\ref{lem:mom}. 
\end{proof}
	
	Note that Lemma~\ref{lem:lbound_mom} and \ref{lem:ubound_mom} together imply that,
	
	\begin{align}
	\label{eq:badevent2}
	\PP \left(k(t) = k \right) \leq \frac{2}{t^2}
	\end{align}
	for $k \neq k^*$ and $t > \frac{256\sigma^4\log T}{\Delta_k^2}$.

\begin{proof}[Proof of Theorem~\ref{thm:r2}]
	Let $T_k = \frac{256\sigma^4\log T}{\Delta_{(k)}^2}$ for $k = 2,..,N$. The regret of the algorithm can be bounded as,
	\begin{align*}
	&R(T) \leq \Delta_{(N)}T_{N} + \sum_{k = 0}^{N-3} \sum_{t = T_{N - k }}^{T_{N - k -1}} \left(  \PP\left( \mathds{1} \{k(t) \in \{{(1)},...,{(N-k-1)} \} \right)  \times \Delta_{(N-k-1)} + \sum_{i = N-k}^{N} \Delta_{(i)} \PP\left( \mathds{1} \{k(t) = (i) \} \right) \right) \\
	&\leq \Delta_{(N)}T_{N} +  \sum_{k = 0}^{N-3} \left( \Delta_{(N-k-1)} \left(T_{N - k -1} - T_{N - k} \right) + \sum_{t = T_{N - k}}^{T_{N - k -1}} \sum_{i = N-k}^{N} \frac{2\Delta_{(i)}}{t^2}\right)  \numberthis \label{eq:uselater2} \\
	& \leq \sum_{k = 0}^{N-3} \frac{512\sigma^4\log T}{\Delta_{(N -k - 1)}} \left(1 - \frac{\Delta^2_{(N - k -1)}}{\Delta^2_{(N - k)}} \right) + \Delta_{(N)}T_{N} +  \frac{N\pi^2}{3} \left(\sum_{i = 2}^{N} \Delta_{(i)} \right)\\
	&=\frac{512\sigma^4\log T}{\Delta_{(N)}} + \frac{N\pi^2}{3} \left(\sum_{i = 2}^{N} \Delta_{(i)} \right) +\sum_{k = 2}^{N-1}\frac{512\sigma^4\log T}{\Delta_{(k)}} \left(1 - \frac{\Delta^2_{(k)}}{\Delta^2_{(k+1)}} \right).
	\end{align*}

\end{proof}

\section{Instance Dependent Terms}
\label{sec:regret}

In this section we devote our attention to the instance dependent terms in Theorems~\ref{thm:r1} and \ref{thm:r2}. We will first prove Corollary~\ref{cor:simple}. 

\begin{proof}[Proof of Corollary~\ref{cor:simple}]
	We will prove the two statements about the two estimators separately,
	
	$(i)$  Going back to Lemma~\ref{lem:lbound} in the proof of Theorem~\ref{thm:r1}, we get that,
	\begin{align*}
	\PP(U_k(t) < \mu^*) \geq 1 - \frac{1}{t^2}
	\end{align*}
	when $t > \frac{2304M^4 \log ^2 \left( \frac{96M}{\Delta_{(2)}\gamma}\right) \log T}{\Delta_{(k)}^2 \gamma^2}$. This simply follows from the fact that $\Delta_{(2)}$ is the smallest gap. Therefore, the chain leading to Eq.~\eqref{eq:uselater} follows with the new definition of $T_k = \frac{2304M^4 \log ^2 \left( \frac{96M}{\Delta_{(2)}\gamma}\right)}{\Delta_{(k)}^2 \gamma^2}$. Hence, the regret of Algorithm~\ref{alg:dUCB} under estimator~\eqref{eq:est1} is bounded as follows:
	\begin{align*}
	&R(T) \leq  \frac{2304M^4 \log ^2 \left( \frac{96M}{\Delta_{(2)}\gamma}\right) \log T}{\Delta_{(k)} \gamma^2} \left(1  + \sum_{k = 2}^{N-1} \left(1 - \frac{\Delta_{(k)}^2}{\Delta_{(k+1)}^2} \right) \right) + \frac{\pi^2}{3} \left(\sum_{i = 2}^{N} \Delta_{(i)} \right)\\
	& =  \frac{2304\lambda(\pmb{\mu})M^4\log^2 (96M/\gamma\Delta_{(2)} ) \log T}{\Delta_{(2)}} + \frac{\pi^2}{3} \left(\sum_{i = 2}^{N} \Delta_{(i)} \right). \numberthis \label{eq:r12}
	\end{align*}
	
	We can analyze the same terms in an alternate manner. From Eq.~\eqref{eq:uselater} in the proof of Theorem~\ref{thm:r1}, it follows that the regret of Algorithm~\ref{alg:dUCB} under the clipped estimator is bounded by,
	\begin{align*}
	R(T) \leq \Delta_{(N)}T_{2} + \frac{\pi^2}{3} \left( \sum_{i} \Delta_{(i)}\right).
	\end{align*}
	Using the definition of $T_{2}$ in~\eqref{eq:uselater} we obtain:
	\begin{align*}
	&R(T) \leq \frac{2304M^4\log^2 (96M/\gamma\Delta_{(2)}) \log T}{\Delta_{(2)}} \times \frac{1}{\Delta_{(2)}}. 
	\end{align*}
	Combining the above equation with~\eqref{eq:r12} we get the desired result.

$(ii)$ Theorem~\ref{thm:r2} immediately implies that 

\[R(T) \leq \frac{256\lambda(\pmb{\mu})\sigma^4  \log T}{\Delta_{(2)}} + \frac{\pi^2}{3} \left(\sum_{i = 2}^{N} \Delta_{(i)} \right) \label{eq:r21},\] for the median of means estimator. 

We can alternately analyze the regret as follows. From Eq.~\eqref{eq:uselater2} in the proof of Theorem~\ref{thm:r2}, it follows that the regret of Algorithm~\ref{alg:dUCB} under the median of means estimator is bounded by,
\begin{align*}
R(T) \leq \Delta_{(N)}T_{2} + \frac{\pi^2}{3} \left( \sum_{i} \Delta_{(i)}\right).
\end{align*}
 Using the definition of $T_{2}$ in~\eqref{eq:uselater2} we obtain:
\begin{align*}
&R(T) \leq \frac{256\sigma^4 \log T}{\Delta_{(2)}} \times \frac{1}{\Delta_{(2)}}. 
\end{align*}

Combining the equations above we get the desired result. 

\end{proof}

Now we will work under the assumption that the gaps in the means of the experts are generated according to the generative model in Corollary~\ref{lem:mu}. 

\begin{proof}[Proof of Corollary~\ref{lem:mu}]
	In light of Corollary~\ref{cor:simple}, we just need to prove that $\EE_{p_{\Delta}}[\lambda(\pmb{\mu})] = O(\log N)$. 
	
	Now, we will assume that $\{\Delta_{(i)}\}$ for $i = 3,...,N$, are order statistics of $N-2$ i.i.d uniform r.vs over the interval $[\Delta_{(2)},1]$. 
	
	Note that by Jensen's we have the following:
	\begin{align}
	\label{eq:jensen}
	1 - \EE \left[ \frac{\Delta_{(k)}^2}{\Delta_{(k+1)}^2} \right] \leq 1 - \EE \left[  \frac{\Delta_{(k)}}{\Delta_{(k+1)}} \right]^2. 
	\end{align}

	Let $X = \Delta_{(k)}$ and $Y = \Delta_{(k+1)}$ for $k \geq 3$. The joint pdf of $X,Y$ is given by,
	
	\begin{align*}
	&f(x,y) = \frac{(N-2)!}{(k-1)!(N-3-k)!} \left(\frac{x - \Delta_{(2)}}{1 - \Delta_{(2)} } \right)^{k-1} \times \left(1 - \frac{y - \Delta_{(2)}}{1 - \Delta_{(2)} } \right)^{N -3 -k} \frac{1}{(1 - \Delta_{(2)} )^2}.
	\end{align*}
	
	Therefore, we have the following chain:
	\begin{align*}
	&\EE \left[ \frac{X}{Y}\right] = \int_{y = \Delta_{(2)} }^{1} \int_{x = \Delta_{(2)}}^{y} \frac{x}{y} \frac{(N-2)!}{(k-1)!(N-3-k)!} \times \left(\frac{x - \Delta_{(2)}}{1 - \Delta_{(2)} } \right)^{k-1} \left(1 - \frac{y - \Delta_{(2)}}{1 - \Delta_{(2)} } \right)^{N -3 -k} \frac{1}{(1 - \Delta_{(2)} )^2} dx dy \\
	&= \int_{b = 0 }^{1} \int_{a = 0}^{b} \frac{(1 - \Delta_{(2)})a+ \Delta_{(2)}}{(1 - \Delta_{(2)})b+ \Delta_{(2)}} \frac{(N-2)!}{(k-1)!(N-3-k)!} \times \left(a \right)^{k-1} \left(1 -b \right)^{N -3 -k}  da db \\ 
	&\geq \int_{b = 0 }^{1} \int_{a = 0}^{b} \frac{a}{b} \frac{(N-2)!}{(k-1)!(N-3-k)!} \times \left(a \right)^{k-1} \left(1 -b \right)^{N -3 -k}  da db \\
	& = \frac{k}{k+1}. 
	\end{align*}
	
	Combining this with Eq~\eqref{eq:jensen} yields,
	\begin{align*}
	&\EE_{p_{\Delta}} \left[ \sum_{k = 2}^{N-1} \left(1 - \frac{\Delta_{(k)}^2}{\Delta_{(k+1)}^2} \right) \right] \\
	&\leq 1 + \sum_{k = 3}^{N-1} \left(1 -  \frac{k^2}{(k+1)^2} \right) \\
	&= 1 + \sum_{k = 3}^{N-1} \left(\frac{2k+1}{(k+1)^2} \right) \\
	&\leq 1 + \sum_{k = 3}^{N-1} \frac{2}{k+1} \numberthis \label{eq:deltabound}\\
	&\leq 1 + 2\log N.
	\end{align*}
\end{proof}

\section{More on Empirical Results}
\label{sec:more_emp}

In this section we provide more details about our empirical results under the following sub-headings. 

\begin{figure}[h]
	\centering
	\includegraphics[width=0.4\linewidth]{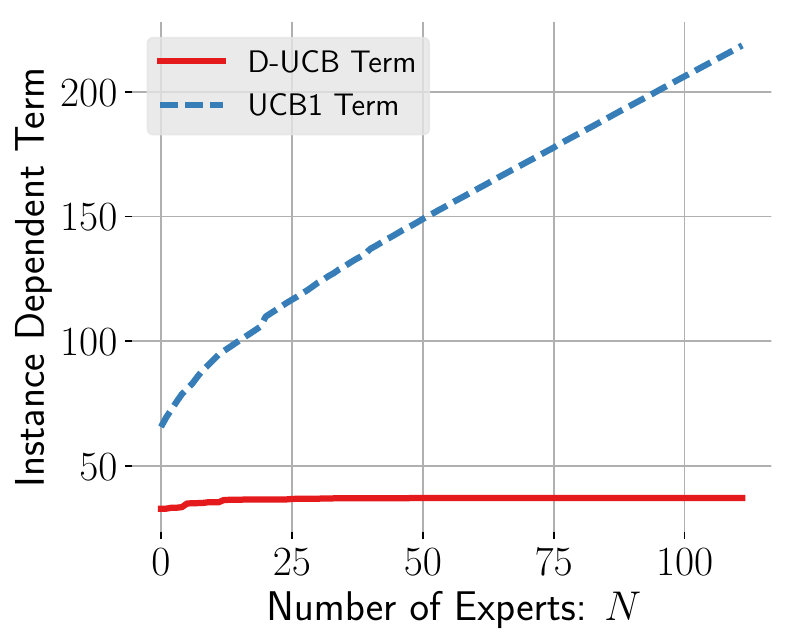}
	\caption{\small We plot the instance-dependent terms from D-UCB bounds (the term in Theorem~\ref{thm:r2} involving the gaps~\eqref{eq:instance1}) and that of UCB-1 bounds~\eqref{eq:instance2} as the number of experts grows in the stream analytics dataset. It can be observed that the instance dependent term from D-UCB grows at a much slower pace with the number of experts, and in fact stops increasing after a certain point. }
	\label{fig:terms}
\end{figure}

{\bf Training of Stochastic Experts: } In Algorithm~\ref{alg:batched}, new experts are added before starting a new batch. These stochastic experts are classifying functions trained using cost-sensitive classification oracles on data observed so far, which uses the ideas in~\cite{beygelzimer2009offset}. The key idea is to reduce the cost-sensitive classification problem into importance weighted classification, which can be solved using binary classifiers by providing weights to each samples. Suppose a context $x$ is observed and Algorithm~\ref{alg:batched} chooses an expert $\pi_i$ and draws an arm $a$ from the conditional distribution $\pi_i(V \vert x)$. Suppose the reward observed is $r(a)$. Then the training sample $(x,a)$ with a sample weight of $r(a)/\pi_i(a \vert x)$ is added to the dataset for training the next batch of experts. It has been shown that this importance weighing yields \textit{good} classification experts. These classifiers can provide confidence scores for arms, given a context and hence can serve as stochastic experts. $4$ different experts are added at the beginning of each batch, out of which three are trained by XgBoost as base-classifier while one is trained by logistic regression. Diversity is maintained among the experts added by training them on bootstrapped versions of the data observed so far, and also through selecting different hyper-parameters. Note that the parameter selection scheme is not tuned per dataset,  but is held fixed for all three datasets. 

{\bf Estimating Divergence Parameters: } Both our divergence metrics $M_{ij}$'s and $\sigma_{ij}$'s can be estimated from data observed so far, during a run of Algorithm~\ref{alg:batched}. These divergences do not depend on the arm chosen, but only on the context distribution and the conditional distributions encoded by the expert. Therefore, they can be easily estimated from data observed. Suppose, $n$ contexts have been observed so far $\{x_1,...,x_n \}$. We are interested in estimating $\sigma_{ij}$ that is the chi-square divergence between $\pi_i$ and $\pi_j$. An estimator for this would be the empirical mean $(1/n) \times \sum_{k = 1}^{n} D_{f_2} (\pi_i(.\vert x_k) \Vert \pi_j(.\vert x_k) )$. Note that the distribution over the arms $\pi_j(.\vert x_k)$ is nothing but the confidence scores observed through evaluation of the classifying oracle $\pi_j$ on the features/context $x_i$. In order to be robust, we use the median of means estimator instead of the simple empirical mean for estimating the divergences. 

{\bf Empirical Analysis of Instance Dependent terms: } In this section we empirically validate that our instance dependent terms in Theorem~\ref{thm:r1} and \ref{thm:r2} are indeed much smaller compared to corresponding terms in the UCB-1~\cite{auer2002using} regret bounds, even in real problem where our generative assumptions do not hold. In order to showcase this, we plot the instance-dependent term in Theorem~\ref{thm:r2} which is given by, \[ \sum_{k = 2}^{N-1} \frac{1  }{\Delta_{(k)}} \left(1 - \frac{\Delta_{(k)}^2}{\Delta_{(k+1)}^2} \right) + \frac{1}{\Delta_{(N)}} \numberthis \label{eq:instance1}\] along with the corresponding term in UCB-1 bounds given by, \[ \sum_{k = 2}^{N} \frac{1}{\Delta_{(k)}}\numberthis \label{eq:instance2}, \] as the number of stochastic experts grow in the stream dataset experiments in Section~\ref{sec:sims}. The true means of the experts have been estimated in hindsight over the whole dataset. The plot is shown in Fig.~\ref{fig:terms}. It can be observed that the term in the bounds of D-UCB grows at a much slower pace, and in fact stops increasing with the number of experts after a certain point.

\end{document}